\title{General Uncertainty Estimation with Delta Variances}
\author {
    Simon Schmitt, \textsuperscript{\rm 1,2}
    John Shawe-Taylor, \textsuperscript{\rm 2}
    Hado van Hasselt\textsuperscript{\rm 1}
}
\newtheorem*{theorem*}{Theorem}
\newtheorem{definition}{Definition}
\newtheorem{proposition}{Proposition}
\newtheorem*{proposition*}{Proposition}
\newtheorem{lemma}{Lemma}
\newtheorem*{lemma*}{Lemma}
\newcommand{\defeq}{\vcentcolon=}
\newcommand{\trans}[1]{{#1}^\top}
\newcommand{\Expectation}[2][]{\mathbb{E}_{#1}\left[#2\right]}
\newcommand{\Variance}[2][]{\mathbb{V}_{#1}\left[#2\right]}
\newcommand{\gradtheta}{\nabla_\theta}
\newcommand{\Data}{\mathcal{D}}
\newcommand{\thetabar}{{\bar{\theta}}}
\newcommand{\thetatrue}{\theta_\mathrm{True}}
\newcommand{\normal}[1]{\mathcal{N}(#1)}
\newcommand{\bigoh}{O}
\newcommand{\Bigoh}[1]{\bigoh\left( #1 \right)}
\newcommand{\derivedQ}{u}
\newcommand{\derivedQtheta}{\derivedQ_\theta}
\newcommand{\derivedQthetabar}{\derivedQ_\thetabar}
\newcommand{\quadForm}[2][]{#2^\top \, #1 \, #2}
\newcommand{\deltaThetaSample}{z}
\newcommand{\ftheta}{f_\theta}
\newcommand{\fthetax}{\ftheta(x)}
\newcommand{\deltaUz}{\Delta_{\derivedQ(z)}}
\newcommand{\deltaUx}{\Delta_{\derivedQ(x)}}
\newcommand{\deltaF}{\Delta_{f(x)}}
\newcommand{\deltatheta}{\Delta\theta}
\newcommand{\iinN}{i\sim U(1, \ldots, N)}
\newcommand{\PosteriorCovarianceMatrix}{\Sigma}
\newcommand{\iHessian}{H^{-1}_f}
\newcommand{\EigenvalueExampleMatrix}{A}
\newcommand{\fixedpointSelfFunction}{F}
\newcommand{\fixpointZeroFunction}{G}
\newcommand{\ziterate}{w}
\newcommand{\zfixed}{\ziterate^{*}}
\newcommand{\identity}{I}
\newcommand{\pfthetaxi}{p(y_i| \ftheta(x_i))}
\newcommand{\pfthetax}{p(y| \fthetax)}
\newcommand{\bootRademacher}{r}
\newcommand{\thetaRademacher}{\theta_{\bootRademacher}}
\newcommand{\DVariance}[2][]{\mathbb{DV}_{#1}\left[#2\right]}
\begin{document}

\maketitle

\begin{abstract}
Decision makers may suffer from uncertainty induced by limited data. This may be mitigated by accounting for epistemic uncertainty, which is however challenging to estimate efficiently for large neural networks.

To this extent we investigate Delta Variances, a family of algorithms for epistemic uncertainty quantification, that is computationally efficient and convenient to implement. It can be applied to neural networks and more general functions composed of neural networks. As an example we consider a weather simulator with a neural-network-based step function inside -- here Delta Variances empirically obtain competitive results at the cost of a single gradient computation.

The approach is convenient as it requires no changes to the neural network architecture or training procedure. We discuss multiple ways to derive Delta Variances theoretically noting that special cases recover popular techniques and present a unified perspective on multiple related methods. Finally we observe that this general perspective gives rise to a natural extension and empirically show its benefit.

\end{abstract}

\noindent

\section{Introduction}

Decision makers often need to act given limited data. Accounting for the resulting uncertainty (epistemic uncertainty) may be helpful for active learning~\cite{MacKay:92c}, exploration \cite{Duff:2002,Auer:2002} and safety \cite{Heger:94}.

How to measure epistemic uncertainty efficiently for large neural networks is active research. Computational efficiency is important because even a single evaluation (e.g. a forward pass through a neural network) can be expensive. Popular approaches compute an ensemble of predictions using bootstrapping or MC dropout and incur a multiplicative computational overhead. Other approaches are faster but require changes to the predictors architecture and training procedure~\cite{Amersfoort:2020UncertaintyDUQ}. 

In this paper we propose the \emph{Delta Variance} family of algorithms which connects and extends Bayesian, frequentist and heuristic notions of variance. 
Delta Variances require no changes to the architecture or training procedure while incurring the cost of little more than a gradient computation. We present further appealing properties and benefits in Table~\ref{tab:delta_benefits}.

The approach can be applied to neural networks or functions that contain neural networks as building blocks to compute a quantity of interest. For instance we could learn a step-by-step dynamics model and then use it to infer some utility function for decision making.

\begin{figure}[ht]
    \centering
    \includegraphics[width=.47\textwidth]{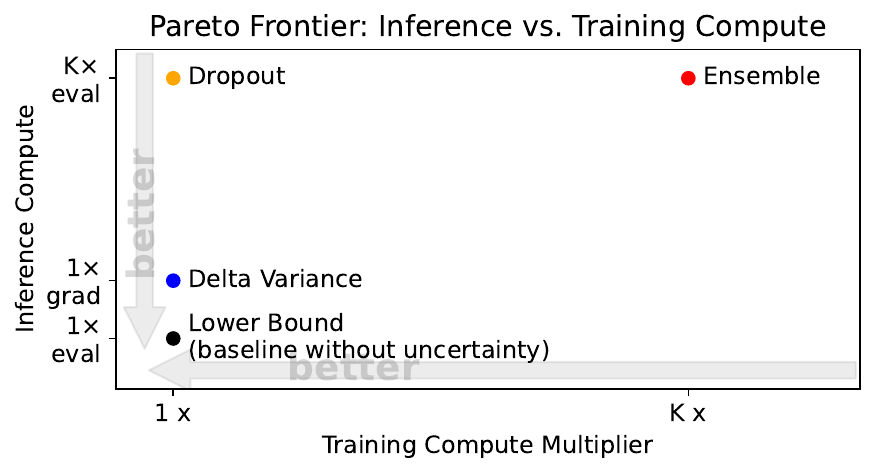}
    \caption{We compare the computational overhead of training and evaluating different variance estimators. Delta Variances are favourable in terms of computational efficiency. They incur negligible training overhead while inference  incurs the cost of a regular gradient pass making them more efficient than the alternatives considered.
    Monte-Carlo Dropout also incurs negligible training overhead, but requires $K$ independent evaluations for inference.
    Most expensive are Bootstrapped Ensembles requiring $K\times$ repeated computations.
    }
    \label{fig:delta_pareto_inference_vs_training}
\end{figure}

\begin{table*}
\centering
\begin{tabular}{ c | c c c }
 & \textbf{Delta Variances} & \textbf{Ensemble} & \textbf{MC-Dropout} \\
\toprule
\multicolumn{4}{l}{\textbf{Efficiency}} \\						
Inference cost	&	$\mathbf{1 \times gradient}$	&	$K \times\ \mathrm{evaluations}$	&	$K \times \ \mathrm{evaluations}$	\\
Training overhead	&	$\mathbf{1 \times}$	&	$K \times$	&	$\mathbf{1 \times}$	\\
Memory overhead	&	$2 \times$	&	$K \times$	&	$\mathbf{1 \times}$	\\
\midrule
\multicolumn{4}{l}{\textbf{Ease of Use}} \\						
No architecture requirements	&	\textbf{\checkmark}	&	\textbf{\checkmark}	&		\\
No change to training procedure	&	\textbf{\checkmark}	&		&		\\
Deterministic result	&	\textbf{\checkmark}	&	\textbf{\checkmark}	&		\\
\bottomrule
\end{tabular}
\caption{\label{tab:delta_benefits} 
Delta Variances have appealing benefits: The prototypical variant with diagonal $\PosteriorCovarianceMatrix$ is computationally efficient requiring only a gradient pass for inference while other methods evaluate the neural network multiple times. Furthermore they easily build on the existing training procedure and can even be added post hoc after training. Delta Variances do not require architecture changes such as introducing dropout layers or training procedure changes as needed for ensembling or even hyper parameter search. Finally Delta Variances have a simple closed form expression that yields reproducible deterministic results.}
\end{table*}

In Section~\ref{sec:delta_experiments} we consider the GraphCast weather forecasting system with a neural network step function \cite{lam:2023GraphCast}. We then compute the epistemic uncertainty of various derived quantities such as the expected precipitation or wind-turbine-power at a particular location. 

Section~\ref{sec:delta_analysis} observes how instances of the Delta Variance family can be derived using different assumptions and theoretical frameworks. We begin with a Bernstein-von Mises plus Delta Method derivation, which relies on strong assumptions. We conclude with an influence function based derivation, which relies only on mild assumptions. Interestingly the resulting instances are not only similar, but also become identical as the number of observed data-points grows. 

Formalizing the Delta Variance family allows us to connect Bayesian and frequentist notions of variance, adversarial robustness and anomaly detection in a unified perspective. This perspective can be used to answer questions such as: What happens if we use a Bayesian variance, but our neural network does not meet all theoretical assumptions -- what is a theoretically sound interpretation of the number that we compute? 
To further highlight the generality of this unified perspective we propose a novel Delta Variance in Section~\ref{sec:delta_learning_sigma} for which we observe empirically improvements in Section~\ref{sec:delta_experiments}.

When applied to the state-of-the-art GraphCast weather forecasting system we observe favourable results. In comparison to popular related approaches such as ensemble methods our method exhibits similar quality while requiring fewer computational resources.

\paragraph{What is Epistemic Variance?}
Given limited data the parameters $\theta$ of a parametric function e.g. a neural network $f_\theta$ can only be identified with limited certainty. The resulting parameter uncertainty translates into uncertainty in the outputs of $f_\theta(x)$ and any other function $\derivedQtheta(z)$ that depends on $\theta$. We define Epistemic Variance as the output variance induced by the posterior distribution over parameters given the function $f$ and its training data $p(\theta|f, \Data)$:
$$ \Variance[\theta \sim p(\theta|f, \Data)]{f_\theta(x)} $$
Section~\ref{sec:delta_problem_description} extends this definition to any function $\derivedQtheta$ that depends on parameters $\theta$ that were estimated using $f_\theta$ and $\Data$. As an illustration consider Section~\ref{sec:delta_experiments} where $\ftheta$ is a learned weather dynamics model and $\derivedQtheta$ are weather dependent utility functions - e.g. a wind turbine power yield forecast. The epistemic variance of $\derivedQtheta(z)$ is then $\Variance[\theta \sim p(\theta|f, \Data)]{\derivedQtheta(z))}$.

\paragraph{What is a Delta Variance Estimator?}
Variance estimators of the Delta Variance family are all of the following parametric form:
$$
\quadForm[\PosteriorCovarianceMatrix]{\deltaF} $$
being a vector-matrix-vector product using the gradient vector of $f$: $\deltaF \defeq \gradtheta f_\theta(x)$ while leaving some flexibility in the choice of matrix $\PosteriorCovarianceMatrix$.
In Section~\ref{sec:delta_analysis} and Table~\ref{tab:delta_interpretations} we discuss different choices of $\PosteriorCovarianceMatrix$ and their properties.
For $\PosteriorCovarianceMatrix_f=\frac{1}{N}F_f^{-1}$ being the inverse Fisher information matrix divided by the number of data-points $N$, it can be shown under suitable conditions that 
$$ \Variance[\theta \sim p(\theta|f, \Data)]{f_\theta(x)} \approx \quadForm[\PosteriorCovarianceMatrix_f]{\deltaF} $$
Its name is inspired by the closely related Delta Method \cite{Lambert:1765Beytraege,gauss:1823ErrorTheory,Doob:1935LimitingDistributions}
-- see \citet{Gorroochurn:2020WhoInventedDelta} for a historic account -- which provides one of many ways to derive Delta Variances.

\paragraph{What is a Quantity of Interest?}
Sometimes we use a neural network $\ftheta$ to learn predictions, that are then used to compute a downstream quantity of interest.
For instance we could learn a step-by-step weather dynamics model $\fthetax$ and then use it to infer some utility function for decision making $\derivedQtheta(z)$.
Given limited training data neither the neural network’s prediction nor the downstream quantity of interest will be exact.
The definition of Epistemic Variance and the Delta Variance family both extend conveniently to quantities of interest $\derivedQtheta$. In fact we only need to replace $\deltaF$ by $\deltaUz\defeq \gradtheta \derivedQtheta(z)$:
$$ \Variance[\theta \sim p(\theta|f, \Data)]{\derivedQtheta(z)} \approx \quadForm[\PosteriorCovarianceMatrix_f]{\deltaUz}$$
Conveniently we can still use the same $\PosteriorCovarianceMatrix_f$ as before.
It is independent of $\derivedQtheta$ and can be re-used for various quantities of interest.

\section{Notation}
We consider a function $\fthetax$ with parameters $\theta$ trained on a dataset $\Data$ of size $N$. We strive to estimate the uncertainty introduced by training on limited data. To admit Bayesian interpretations we assume that the function $\fthetax$ corresponds to a density model or probability mass function $p(y| \fthetax)$. Intuitively $p(y| \fthetax)$ is a simple pre-defined transformation converting the outputs of $\fthetax$ into a probability distribution.
For example $\fthetax$ may define the mean of a normal distribution:
$\pfthetax = \frac{1}{\sqrt{2\pi\sigma^2}}\exp\left(-\frac{(y-\fthetax)^2}{2\sigma^2}\right)$
Alternatively it may represent the logits for a categorical distribution.
In our examples -- unless specified otherwise -- $\pfthetax$ will represent a normal distribution with mean $\fthetax$ and constant variance or a multi-variate with constant diagonal variance.
Note that the probabilistic interpretation can also be implicit e.g. when $\fthetax$ is trained with a negative-log-likelihood-equivalent loss, such as $L_2$ regression or cross-entropy).
Unless otherwise specified we assume that $\theta$ has been trained until convergence -- i.e. equals the (local) maximum likelihood estimate $\thetabar$. Under appropriate conditions $\thetabar$ converges to the true distribution parameters $\thetatrue$.
When adopting a Bayesian view with prior belief $p(\theta)$ the posterior over parameters is defined as $p(\theta|\Data)\propto p(\Data|\theta) p(\theta)$. 
$\mathbb{E}_{z\sim p(z)}$ refers to the expectation with respect to random variable $z$ with distribution $p(z)$ -- which we shorten to  $\mathbb{E}_{z}$ or $\mathbb{E}$ when the distribution is clear. Similarly let $\Variance[]{X}\defeq\Expectation{X^2} - \Expectation[]{X}^2$.
Let 
$
F_f\defeq \mathbb{E}_{x}\mathbb{E}_{y\sim f_{\thetatrue}(x)} \gradtheta \log \pfthetax \trans{\gradtheta \log\pfthetax}|_{\theta=\thetatrue}
$
be the Fisher information matrix and let $ \hat{F}_{f} \defeq \frac{1}{N} \sum_{(x_i, y_i) \in \Data} \quadForm[]{\gradtheta \log \pfthetaxi} |_{\theta=\thetabar}$ be the empirical Fisher information. For a Gaussian model the gradient simplifies to $\gradtheta \log \pfthetaxi=\frac{y-\fthetax}{\sigma^2} \gradtheta \fthetax$. Note that $\hat{F}_{f}$ is an average and does not grow with N. A related matrix is the Hessian of the loss. While the Hessian of the summed negative log-likelihood grows as $\Bigoh{N}$ we work with the average Hessian to unify the semantics: Let $H_f$ be the Hessian of the average training loss (e.g. negative log-likelihood) of all $\Data$ evaluated at $\thetabar$. This corresponds to weighing each data-point with $\frac{1}{N}$. Hence both $\hat{F}_{f}$ and $H_f$ are averages and do not grow with $N$. This allows us to make any dependences on $N$ explicit.
When strong conditions are met \citep[see][for details]{vandervaart1998:asymptotic} the Bernstein-von Mises theorem ensures that the Bayesian posterior converges to the Gaussian distribution $\normal{\thetabar,\frac{1}{N}F_f^{-1}}$ in total variation norm independently of the choice of prior as the number of data-points $N$ increases.
In Definition~\ref{def:quantity_of_interest} we consider \emph{Quantities of Interest} that we denote $\derivedQtheta$. In practice $\derivedQtheta(z)$ may be a utility function that depends on some context provided by $z$.
For a simpler exposition but without loss of generality we assume that $\derivedQtheta(z)$ is scalar valued. 
We require $\ftheta$ and $\derivedQtheta$ to have bounded second derivatives wrt. $\theta$ in order to perform first order Taylor expansions. To simplify notations we assume that $\theta$ is evaluated at the learned parameters $\thetabar$ unless specified otherwise: in particular we write $\gradtheta f_\theta(x)$ in place of $\gradtheta f_\theta(x)|_{\theta=\thetabar}$ and $\deltaF \defeq \gradtheta f_\theta(x)|_{\theta=\thetabar}$.

\section{Epistemic Variance of Quantities of Interest}
\label{sec:delta_problem_description}
Sometimes we use a neural network $\ftheta$ to learn predictions, that are then used to compute a downstream quantity of interest $\derivedQtheta$ -- see motivational examples below. Given limited training data neither the neural network’s prediction nor the downstream quantity of interest will be exact. This motivates our research question:

\begin{center}
   \textbf{
   If we estimate the parameters $\mathbf{\theta}$ of $\mathbf{\derivedQtheta}$ by learning $\mathbf{\fthetax}$, how can we quantify the epistemic uncertainty of $\mathbf{\derivedQtheta(z)}$?}
\end{center}

For a simpler exposition and without loss of generality we assume that $\derivedQtheta(z)$ predicts scalar quantities. The prototypical example is a utility function that depends on some context provided by $z$ and internally uses $\ftheta$ to compute a utility value. The derivations carry over naturally to the multi-variate case. Note that $\ftheta$ and $\derivedQtheta$ may have different input spaces. Our research focuses on the general case where $\ftheta\neq\derivedQtheta$ which has received little attention. This naturally includes the case where $\ftheta=\derivedQtheta$. 

\begin{definition}
\label{def:quantity_of_interest}
We call the real-valued function $\derivedQtheta(z)$ \emph{quantity of interest} if it depends on the same parameters $\theta$ as a related parametric model $\fthetax$.
\end{definition}

\subsection{Motivational Examples}
We consider three motivational examples for training on $\ftheta$ but evaluating a different quantity of interest $\derivedQtheta$. We will see that training $\ftheta$ is straightforward while training a predictor for $\derivedQtheta(z)$ is inefficient, impractical, or even impossible.

\begin{enumerate}
    \item 
As a simple motivation let us consider estimating the 10-year survival chance using a neural network predictor $\ftheta(x)$ of 1-year outcomes given patient features $x$:
$$ \derivedQtheta(x) = \ftheta(x)^{10}$$
This example illustrates that it may be \textbf{impossible to train $\mathbf{\derivedQtheta(x)}$ directly} unless we collect data for 9 more years, hence we train $\ftheta(x)$ and evaluate $\derivedQtheta(z)$.
\item Distinct input spaces: $\derivedQtheta(z)$ might aggregate predictions of $\fthetax$ for sets $z=\{x_1, \dots, x_k\}$: E.g. the survival chance of everyone in set of patients $z$ via $\derivedQtheta(z)\defeq \prod_{x_i\in z} \ftheta(x_i)$, or the average value of some basket of items $z$, or the chance of any advertisement from a presented set being clicked. Here training $\ftheta$ may be more convenient than training $\derivedQtheta$.

\item Multiple derived quantities: In Section~\ref{sec:delta_experiments} we compute multiple quantities of interest using the GraphCast weather forecasting system \cite{lam:2023GraphCast}.
Training a separate $\derivedQtheta$ for each of them would be cumbersome and expensive.
\end{enumerate}

\subsection{Epistemic Variance}
Here we define Epistemic Variance first from a Bayesian and then from a frequentist perspective. This allows us to formalize and quantify how parameter uncertainty from training $\ftheta$ translates to uncertainty of any quantity of interest $\derivedQtheta(z)$ that also depends on $\theta$.

\paragraph{Bayesian Definition}
Epistemic uncertainty can be formalized with a Bayesian posterior distribution over parameters given training data: $p(\theta|\Data)$. The Epistemic Variance of a function evaluation $\derivedQtheta(z)$ is then defined to be the variance induced by the posterior over $\theta$:

\begin{definition}
\label{def:epistemic_variance}
Given any function $\derivedQtheta$ and a posterior over parameters $p(\theta|f, \Data)$ resulting from training $\ftheta$ on data $\Data$ the \emph{Epistemic Variance of $\derivedQtheta(z)$} is defined as
$$ 
\Variance[\theta\sim p(\theta|f, \Data)]{\derivedQtheta(z)}
 $$
where $\Variance{X} \defeq \Expectation{X^2} - \Expectation{X}^2$.
\end{definition}

\paragraph{Frequentist Definitions}
Leave-one-out cross-validation \cite{Quenouille:1949Correlation} is a frequentist counterpart to Epistemic Variance. It computes the variance of $\derivedQtheta(z)$ induced by removing a random element from the training data and re-estimating the parameters $\theta$. It is multiplied by $N$ to match the scale of the Bayesian and Bootstrapped Epistemic Variance.

\begin{definition}
Let $\theta_{\setminus i}$ be the leave-one-out parameters resulting from training $\ftheta$ on data $\Data \setminus \{(x_i, y_i)\}$, then the \emph{Leave-one-out Variance of $\derivedQtheta(z)$} is defined as
$$ \Variance[\theta \sim LOO]{\derivedQtheta(z)} \defeq N\, \Variance[\iinN]{\derivedQ_{\theta_{\setminus i}}(z)} $$
where $U(1, \ldots, N)$ is the uniform distribution over indices.
\end{definition}

\begin{definition}
Let $\theta_{b}$ be be the parameters resulting from training $\ftheta$ on a random $\frac{1}{2}$ subset of data $\Data$, then the \emph{Bootstrap Variance of $\derivedQtheta(z)$} is defined as 
$$ \Variance[\theta \sim Bootstrap]{\derivedQtheta(z)} \defeq \frac{1}{2} \Variance[\theta_b]{\derivedQ_{\theta_b}(z)} $$
\end{definition}

\section{Delta Variance Approximators}
Delta Variance estimators are a family of efficient and convenient approximators of epistemic uncertainty. They can be used to compute the Epistemic Variance of a quantity of interest $\derivedQtheta(z)$ where the parameters $\theta$ are obtained by learning $\ftheta$ with limited data.
Given any quantity of interest $\derivedQtheta(z)$ they approximate both the Bayesian Epistemic Variance as well as the frequentist analogues: leave-one-out and bootstrapped variance.
\begin{equation*}
\underbrace{\Variance[\theta \sim p(\theta|f, \Data)]{\derivedQtheta(z)}}_{\text{Epistemic Variance}} 
\approx 
\underbrace{\quadForm[\PosteriorCovarianceMatrix]{\deltaUz}}_{\text{Delta Variance}} 
\approx \underbrace{\begin{cases}
    \Variance[\theta \sim LOO]{\derivedQtheta(z)} \\
    \Variance[\theta \sim Bootstrap]{\derivedQtheta(z)}
\end{cases}}_{\text{Frequentist Notions of Variance}}
\end{equation*}
Here the \emph{Delta} $\deltaUz \defeq \gradtheta \derivedQtheta(z)$ is the gradient vector of $\derivedQtheta$ evaluated at the input $z$. $\PosteriorCovarianceMatrix$ is a suitable matrix for which the canonical choice is an approximation of the scaled inverse Fisher Information matrix of $\ftheta$.

\paragraph{Canonical Choice of $\mathbf{\PosteriorCovarianceMatrix}$} 
The family of Delta Variances in principle supports any positive definitive matrix $\PosteriorCovarianceMatrix$. We will see in Section~\ref{sec:delta_bayesian_derivation} that it intuitively represents the posterior covariance of the parameters $\theta$ after learning $\ftheta$ on the training data $\Data$.
The canonical choice is $\PosteriorCovarianceMatrix\defeq\frac{1}{N}\hat{F}_{f}^{-1}$ being the inverse empirical Fisher Information matrix scaled by the number of data-points $N$.
Plugged into the Delta Variance formula we obtain the following estimate for the Epistemic Variance of $\derivedQtheta(z)$:
$$\Variance[\theta \sim p(\theta|f, \Data)]{\derivedQtheta(z)} \approx \frac{1}{N} \quadForm[\hat{F}_f^{-1}]{\gradtheta \derivedQtheta(z)}$$
It is worth emphasizing that the Fisher information is computed using $\fthetax$ (the model that was used for training $\theta$) while the gradient delta vectors come from $\derivedQtheta(z)$ the quantity of interest that is evaluated. Hence $\hat{F}_f^{-1}$ can be precomputed and reused for various choices of $\derivedQtheta$.

\paragraph{Intuition}
Section~\ref{sec:delta_analysis} explores multiple ways to theoretically justify the Delta Variance family. The Bayesian intuition is that $\PosteriorCovarianceMatrix$ captures the posterior covariance of the parameters $\theta$ while $\deltaUz=\gradtheta \derivedQtheta(z)$ translates this parameter uncertainty from variations in $\theta$ to variations in $\derivedQtheta(z)$. 
In Figure~\ref{fig:delta_variance_convergence} we consider an illustrative example, where a survival rate of $\fthetax = \theta$ has been estimated and is used to make predictions 10 years ahead via $\derivedQtheta(z)=\theta^{10}$.

\paragraph{Theoretical Motivation} The family of Delta Variance estimators is motivated because under strong conditions (see Section~\ref{sec:delta_bayesian_derivation}) and for number of data-points $N$ it can be shown to recover the Epistemic Variance up to a diminishing error:
$$ \underbrace{\Variance[\theta \sim p(\theta|f, \Data)]{\derivedQtheta(z)}}_{\text{Epistemic Variance}} = \underbrace{\quadForm[\PosteriorCovarianceMatrix]{\deltaUz}}_{\text{Delta Variance}} + \Bigoh{N^{-1.5}}$$
An additional motivation is that it can be derived using mild assumptions from a leave-one-out, statistical bootstrapping or an adversarial robustness perspective (see Sections \ref{sec:delta_frequentist_derivation} and \ref{sec:delta_adverserial_derivation}).

\paragraph{Computational Convenience}
Delta Variances are convenient because $\deltaUz \defeq \gradtheta \derivedQtheta(z)$ can be computed using any auto-differentiation framework and because $\PosteriorCovarianceMatrix$ does not depend on $\derivedQtheta$ (e.g. can be re-used for many different quantities of interest $\derivedQtheta$). It is efficient because it is a vector-matrix-vector product, where the matrix can be approximated efficiently (e.g. diagonally, low-rank, or using KFAC~\cite{Martens:2014}).

\begin{figure}[ht]
    \centering
    \includegraphics[width=.47\textwidth]{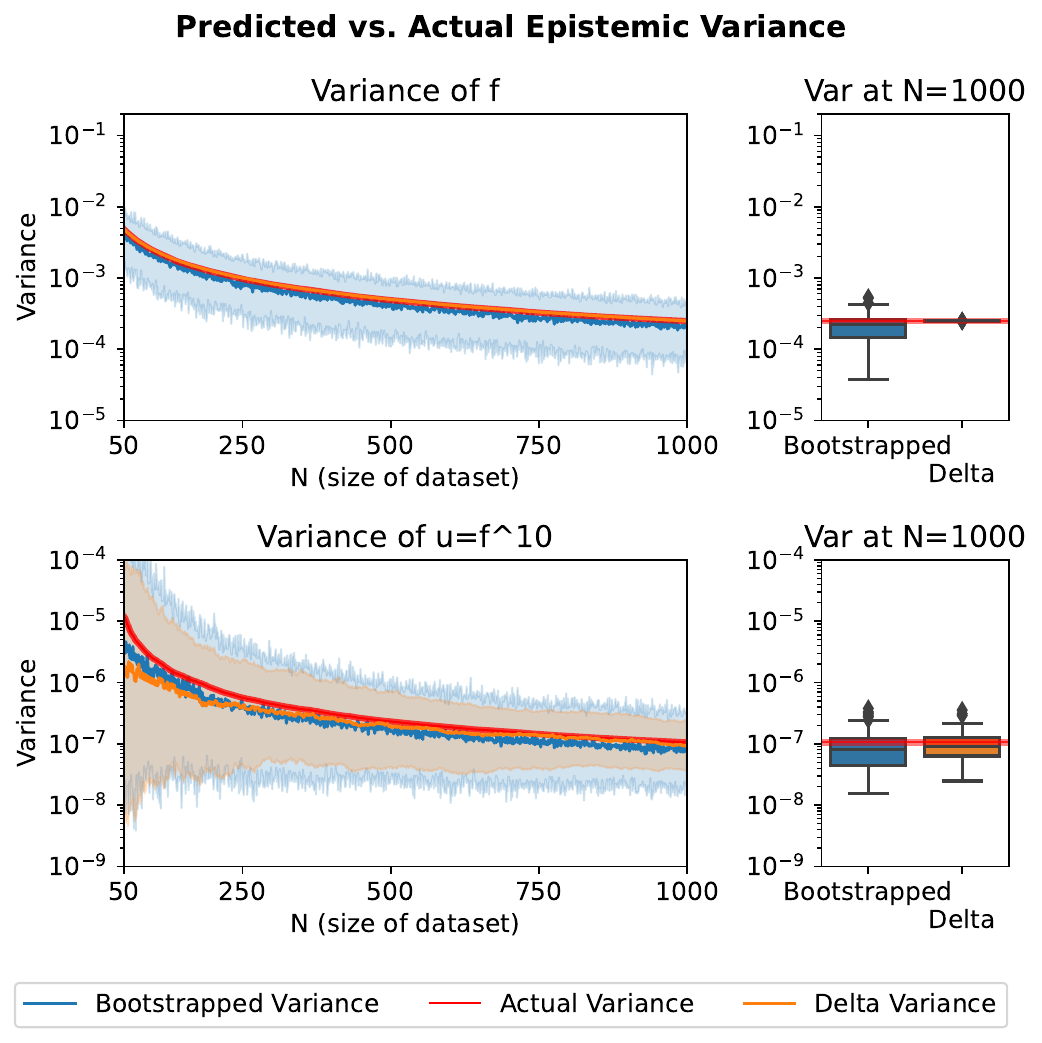}
    \caption{Illustrative survival prediction example. 
    Actual epistemic variance (red) vs. predicted variance using the Delta Variance (orange) or a 10-fold Bootstrap (blue) as the dataset size $N$ grows. Shaded confidence areas contain $95\%$ of the variance predictions. Bold lines are the median. Observe that the orange median line of the Delta Variance and the actual variance in red overlap largely.
    Top: variance of learned function $\ftheta(x)=\theta$ Bottom: variance of quantity of interest $\derivedQtheta(x) \defeq
    \theta^{10}$ evaluations. All methods yield reasonable results for $N>10$ with ensemble methods exhibiting higher variance. Generally the variance for $\derivedQtheta$ is harder to estimate than for $\ftheta$. 
    }
    \label{fig:delta_variance_convergence}
\end{figure}

\newcommand{\footBVM}{5}
\newcommand{\footBoot}{4}
\newcommand{\footLaplace}{3}
\newcommand{\footBoundedHessian}{2}
\newcommand{\footLocalConvergence}{1}
\begin{table*}
\centering
    \begin{tabular}{c|c|c}
        Bayesian Interpretations & Frequentist Interpretations & Choice of $\PosteriorCovarianceMatrix$ \\
        (Section~\ref{sec:delta_bayesian_derivation}) &  & (modulo factors of $N$) \\
        \toprule
        Bernstein-von Mises Posterior\footnotemark[\footBVM] & OOD Detection\footnotemark[\footLocalConvergence] (Sec.~\ref{sec:delta_OOD_derivation}) & $F^{-1}$ \\
        \rule{0pt}{16pt}  
        Misspecified Bernstein-von Mises Posterior\footnotemark[\footBVM] & 
        \makecell{
        Leave-one-out\footnotemark[\footBoundedHessian] \& Bootstrapped\footnotemark[\footBoot]}  Variance (Sec.~\ref{sec:delta_frequentist_derivation})
         & $H^{-1} F H^{-1}$ \\
        \rule{0pt}{14pt}  
        Laplace Posterior\footnotemark[\footLaplace] & Adversarial Robustness\footnotemark[\footBoundedHessian] (Sec.~\ref{sec:delta_adverserial_derivation}) & $H^{-1}$ 
    \end{tabular}
\caption{For each choice of $\PosteriorCovarianceMatrix$ there exists a Bayesian and a frequentist interpretation. Each interpretation requires different assumptions on $\fthetax$ and its loss. Due to their milder assumptions the frequentist interpretations can serve as fall-back interpretations if the stricter conditions on the Bayesian interpretations are not met. 
    For example observe that assuming a Bernstein-von Mises Posterior is computationally equivalent to performing OOD Detection. Interestingly the former makes strong assumptions about $\ftheta$ which typically do not apply to neural networks, while the later only requires differentiability of $\ftheta$ and $\derivedQtheta$ and that $\theta$ converges locally.
    The Hessian is computed with respect to the training loss of $\ftheta$. The Bayesian interpretations start with the true Fisher matrix and approximate it from data e.g. with $\hat{F}$. The frequentist approximations work with $\hat{F}$ directly. In practice both $H$ and $\hat{F}$ are computed at the locally optimal parameters $\thetabar$. We consider $\Sigma$ modulo factors of $N$ as they do not change the interpretation.
    }
\label{tab:delta_interpretations}
\end{table*}

\footnotetext[\footLocalConvergence]{
Only assumes differentiable $\derivedQtheta$ and locally converged $\thetabar$.}
\footnotetext[\footBoundedHessian]{Assumes: Bounded second derivatives of $\log\pfthetax$ and $\derivedQtheta$ with respect to $\theta$, Hessian at $\thetabar$ is invertible, and all from (\footLocalConvergence).}
\footnotetext[\footLaplace]{Assumes: All above, well-specified model $\pfthetax$, $\thetabar$ converged to a unique optimum of the posterior with locally quadratic shape.
Often referred to as \emph{Laplace Approximation} when some conditions are not met. See \citet{MacKay:92b} for more details.}
\footnotetext[\footBoot]{Assumes: All above except the local shape requirement on the posterior in (\footLaplace). Bootstrapping requires regularity of the estimator and see \citet{vandervaart1998:asymptotic} for more conditions.}
\footnotetext[\footBVM]{Similar to (\footBoot) but with stronger requirements: a uniformly consistent maximum likelihood estimator -- see \citet{vandervaart1998:asymptotic} for more conditions.}

\subsection{How to choose $\mathbf{\PosteriorCovarianceMatrix}$}
\label{sec:delta_variance_how_to_choose_sigma}

\paragraph{Principled choices of $\mathbf{\PosteriorCovarianceMatrix}$}
Theory suggests three principled choices for the covariance matrix, which all scale as $\Sigma\propto\frac{1}{N}$. Each choice can be derived in at least two ways using statistics or using influence functions (see Section~\ref{sec:delta_analysis} for details and Table~\ref{tab:delta_interpretations} for an overview). 
Note that here $H_f$ is the Hessian of the average loss.
\begin{enumerate}
    \item The inverse Fisher Information $F_f^{-1}$ divided by $N$.
    \item The inverse Hessian of the average training loss $\iHessian$ divided by $N$.
    \item The sandwich $\iHessian F_f \iHessian$ divided $N$.
\end{enumerate}
For well-specified models the three covariance matrices become eventually equivalent as the average Hessian $H_f$ and empirical Fisher converge to the true Fisher as data increases. In practice they need to be efficiently approximated from finite data (e.g. diagonally or using KFAC) and safely inverted. The first Bayesian approach uses $F_f$ which can be approximated by the empirical Fisher information $\hat{F}_f$ and is easily invertible with minuscule regularization as it is positive semi-definite by construction (eigenvalues $\geq 0$). For this reason the third Bayesian approach often uses $\hat{F}_f$ in place of $H$. Frequentist analogues use the empirical $\hat{F}_f$ directly. In contrast $H_f$ can only be inverted safely at a local minimum (eigenvalues $>0$), which may not be reached precisely with stochastic optimization. Hence inverting $H_f$ requires more careful regularization~\cite{Martens:2014}.
For simplicity we select $\Sigma$ to be a diagonal approximation of the empirical Fisher in our experiments, which alleviates the question of regularization.

\paragraph{A minimal pragmatic choice for $\mathbf{\PosteriorCovarianceMatrix}$}
Many popular optimizers such as ADAM track a diagonal approximation of the Fisher information to conduct a second-order-like optimization -- essentially normalizing the gradient magnitude.
Reusing the same diagonal approximation of $\Sigma$ for optimization and for the Delta Variance computation saves computational resources and lines of code. This is what we do in the minimal working example (see Algorithm~\ref{alg:delta_variance_minimal_example}).

\begin{algorithm}[h]
\caption{\textbf{Minimal Working Example} for Delta Variance estimation of a regression estimator $\fthetax=y$.\\
This minimum working example requires no changes to the training loop. It uses the ADAM second moment estimates as a diagonal Fisher information matrix approximation. Epistemic variance can be estimated for any differentiable $\derivedQtheta$; naturally $\derivedQtheta=\ftheta$ is possible. For alternative $\Sigma$ approximations beyond this minimal example consider Section~\ref{sec:delta_variance_how_to_choose_sigma}.\label{alg:delta_variance_minimal_example}}
{\color{gray}
\textbf{Regular Training Step:} (no modifications needed) \\
This is update step number $t$ with sampled batch $B_t \subset \Data$. \\
The ADAM optimizers internal state includes first and second moments: $m_t$ and $\nu_t$.
\begin{algorithmic}[1]
\State Define batched loss: $L_t(\theta) \defeq \sum_{(x, y) \in B_t} \frac{1}{2}(\fthetax-y)^2$
\State Compute gradient $\Delta_t \defeq \gradtheta L_t(\theta)$
\State ADAM update $\theta_t \defeq ADAM(\theta_{t-1}, \Delta_t) $
\end{algorithmic}
}
\textbf{Epistemic Variance Estimation:} \\
Inputs: Quantity of interest $\derivedQtheta(z)$ with input $z$.
Second moment vector $\nu$ from ADAM (e.g. at the end of training $t=T$), regularization $\epsilon$. Normalization factor $N$ representing the dataset size (optional, $N=1$ yields unnormalized estimates).
\begin{algorithmic}[1]
\State Let vector $\Delta \defeq \gradtheta \derivedQtheta(z)$.
\State Let scalar $v \defeq \sum_i \frac{\Delta_i^2}{\nu_i+\epsilon}$ (summation over all entries indexed by $i$)
\State Return scalar $v/N$ as an estimate of $\Variance[\theta]{\derivedQtheta(z)}$
\end{algorithmic}
\end{algorithm}

\paragraph{Fine-tuning or Learning $\mathbf{\PosteriorCovarianceMatrix}$}
The analytic form of the Delta Variance permits to back-propagate into the values of $\PosteriorCovarianceMatrix$.
This enables approaches that learn better values for $\PosteriorCovarianceMatrix$ from scratch or improve the values via fine-tuning. We explore a simple example in Section~\ref{sec:delta_learning_sigma} that improves empirically over the regular Fisher information by re-scaling some of its entries.

\section{Analysis}
\label{sec:delta_analysis}

In this section we will investigate multiple ways to derive and motivate Delta Variances. Broadly speaking they can be separated into three classes: 
\begin{enumerate}
    \item 
In Section~\ref{sec:delta_bayesian_derivation} we begin with the easiest derivations, which approximate the Bayesian posterior and make strong assumptions that may not always apply to neural networks. 
\item
In Section~\ref{sec:delta_frequentist_derivation} we consider the frequentist analogue of Epistemic Variance, that is compatible with neural networks and does not make assumptions about any posterior. 
\item In Section~\ref{sec:delta_adverserial_derivation} and \ref{sec:delta_OOD_derivation} we consider alternative derivations that are based on adversarial robustness and out-of-distribution detection and rely on even fewer assumptions.
\end{enumerate}
All of the considered derivations yield Delta Variances with principled covariance matrices. 
For an overview consider Table~\ref{tab:delta_interpretations}, where we can observe that assuming a Bernstein-von Mises Posterior is computationally equivalent to performing OOD Detection. Interestingly the former makes strong assumptions about $\ftheta$ which typically do not apply to neural networks, while the later only requires that the covariance of gradients is finite and that $\theta$ converges locally. Due to their milder assumptions the frequentist interpretations can serve as fall-back interpretations if the stricter conditions on the Bayesian interpretations are not met.

\subsection{Bayesian Interpretation}
\label{sec:delta_bayesian_derivation}
We begin with a derivation that gives rise to a bound on the approximation error. While requiring strong assumptions, it serves as a motivation and introduction. The error diminishes with the number of observed data-points $N$:
$$ \underbrace{\Variance[\theta \sim p(\theta|f, \Data)]{\derivedQtheta(z)}}_{\text{Epistemic Variance}} = \underbrace{\quadForm[\PosteriorCovarianceMatrix]{\deltaUz}}_{\text{Delta Variance}} + \Bigoh{N^{-1.5}}$$

\subsubsection{Bernstein-von Mises Motivation}
As a motivational introduction we will derive the approximation error when the Bernstein-von Mises conditions are met (e.g. differentiability and unique optimum -- see \citet{vandervaart1998:asymptotic} for details). Under such conditions the posterior converges to a Gaussian distribution centered around the maximum likelihood solution $\theta$ with a scaled inverse Fisher Information as covariance matrix.
$$ P(\theta|\Data)\to \normal{\theta, \frac{1}{N}F_f^{-1}}$$
The Epistemic Variance can then be computed using the Delta Method resulting in Proposition~\ref{prop:delta_bound}.
\begin{proposition}
\label{prop:delta_bound}
For a normally distributed posterior with mean $\thetabar$ and a covariance matrix $\PosteriorCovarianceMatrix$ proportional to $\frac{1}{N}$ 
it holds:
$$ \underbrace{\Variance[\theta \sim p(\theta|f, \Data)]{\derivedQtheta(z)}}_{\text{Epistemic Variance}} = \underbrace{\quadForm[\PosteriorCovarianceMatrix]{\deltaUz}}_{\text{Delta Variance}} + \Bigoh{N^{-1.5}}$$
where $\deltaUz\defeq \gradtheta \derivedQtheta(z)|_{\theta=\thetabar}$ as usual.
\end{proposition}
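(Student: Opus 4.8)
The plan is to apply the Delta Method: first-order Taylor-expand the quantity of interest $\derivedQtheta(z)$ in $\theta$ about the posterior mean $\thetabar$, observe that the variance of the linear term is exactly the Delta Variance, and then show that every remaining contribution is $\Bigoh{N^{-1.5}}$ or smaller. I would write $\deltatheta\defeq\theta-\thetabar$, so that by hypothesis $\deltatheta\disteq\normal{0,\PosteriorCovarianceMatrix}$, and take all variances and expectations over $\theta\sim p(\theta|f,\Data)$. Using the assumed bound on the second derivative of $\derivedQtheta(z)$ in $\theta$, Taylor's theorem with remainder gives
$$\derivedQtheta(z)=\derivedQthetabar(z)+\trans{\deltaUz}\deltatheta+R(\deltatheta),\qquad |R(\deltatheta)|\le C\,\|\deltatheta\|^2,$$
with $C\defeq\tfrac{1}{2}\sup_\theta\|\Hessian{\derivedQtheta(z)}\|$ finite and independent of $N$.

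Next I would expand the variance by bilinearity. Since $\derivedQthetabar(z)$ is a constant,
$$\Variance{\derivedQtheta(z)}=\Variance{\trans{\deltaUz}\deltatheta}+2\,\mathrm{Cov}\!\left(\trans{\deltaUz}\deltatheta,\,R(\deltatheta)\right)+\Variance{R(\deltatheta)}.$$
The leading term is $\trans{\deltaUz}\PosteriorCovarianceMatrix\,\deltaUz=\quadForm[\PosteriorCovarianceMatrix]{\deltaUz}$ because $\deltatheta$ has covariance $\PosteriorCovarianceMatrix$ -- this is precisely the Delta Variance -- so the proof reduces to showing the cross term and $\Variance{R(\deltatheta)}$ are $\Bigoh{N^{-1.5}}$.

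For that I would use Gaussian moment bounds. Writing $\PosteriorCovarianceMatrix=\tfrac{1}{N}M$ with $M$ a fixed positive semidefinite matrix, $\deltatheta\disteq\tfrac{1}{\sqrt{N}}M^{1/2}\xi$ for $\xi\sim\normal{0,\identity}$, so standard identities give $\Expectation{\|\deltatheta\|^2}=\tfrac{1}{N}\mathrm{tr}\,M=\Bigoh{N^{-1}}$ and $\Expectation{\|\deltatheta\|^4}=\Bigoh{N^{-2}}$. Hence $\Variance{R(\deltatheta)}\le\Expectation{R(\deltatheta)^2}\le C^2\,\Expectation{\|\deltatheta\|^4}=\Bigoh{N^{-2}}$, and since $\Variance{\trans{\deltaUz}\deltatheta}=\quadForm[\PosteriorCovarianceMatrix]{\deltaUz}=\Bigoh{N^{-1}}$, Cauchy--Schwarz yields $\left|\mathrm{Cov}\!\left(\trans{\deltaUz}\deltatheta,\,R(\deltatheta)\right)\right|\le\sqrt{\Variance{\trans{\deltaUz}\deltatheta}\,\Variance{R(\deltatheta)}}=\Bigoh{N^{-1.5}}$. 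Summing the three pieces gives $\Variance{\derivedQtheta(z)}=\quadForm[\PosteriorCovarianceMatrix]{\deltaUz}+\Bigoh{N^{-1.5}}+\Bigoh{N^{-2}}=\quadForm[\PosteriorCovarianceMatrix]{\deltaUz}+\Bigoh{N^{-1.5}}$, which is the claim.

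The main obstacle will be ensuring the $\Bigoh{}$ constants are genuinely uniform in $N$: this needs $\deltaUz$ and $M=N\PosteriorCovarianceMatrix$ to stay bounded (true since they are evaluated at the convergent $\thetabar$ and $\PosteriorCovarianceMatrix\propto\tfrac{1}{N}$ by hypothesis) and a control of $\Expectation{R(\deltatheta)^2}$ valid over the \emph{entire}, unbounded support of the Gaussian rather than just near $\thetabar$. The globally bounded second derivative assumed in the Notation section suffices directly; if one only has a local bound near $\thetabar$, I would instead split the expectation over $\{\|\deltatheta\|\le\epsilon\}$, where the local bound applies, and its complement, whose probability decays like $\Bigoh{e^{-cN}}$ under $\normal{0,\tfrac{1}{N}M}$ and is therefore absorbed into the $\Bigoh{N^{-1.5}}$ term.
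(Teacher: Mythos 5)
Your proof is correct and follows essentially the same route as the paper's: a first-order Taylor expansion of $\derivedQtheta(z)$ about the posterior mean $\thetabar$, identification of the variance of the linear term with the Delta Variance $\quadForm[\PosteriorCovarianceMatrix]{\deltaUz}$, and an order bound on the remainder. If anything your treatment is tighter than the paper's, which simply asserts $\Variance[\deltaThetaSample]{X+\Bigoh{\epsilon}}=\Variance[\deltaThetaSample]{X}+\Bigoh{\epsilon}$ (with $\epsilon^2=1/N$) without justification, whereas you make exactly that step rigorous by isolating the cross term, bounding it via Cauchy--Schwarz against the $\Bigoh{N^{-2}}$ variance of the remainder obtained from Gaussian fourth moments, and noting where uniformity of the constants and the bounded-second-derivative assumption are actually used.
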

\begin{proof}
See appendix.
\end{proof}
If the Bernstein-von Mises conditions are met Proposition~\ref{prop:delta_bound} holds with $\PosteriorCovarianceMatrix=\frac{1}{N}F_f^{-1}$.

\paragraph{Further Bayesian Interpretations}
Other Gaussian posterior approximations can be considered by plugging their respective posterior covariance matrix into Proposition~\ref{prop:delta_bound}:
The misspecified Bernstein-von Mises theorem \cite[see][]{Kleijn:2012misspecifiedBvM} states that we obtain the sandwich covariance $\frac{1}{N} \iHessian F_f \iHessian$ if the model $\pfthetax$ is misspecified (i.e. does not represent the data well). Proponents advocate that the sandwich estimate is more robust to heteroscedastic noise while others argue against it~\cite{Freedman:2006SandwichEstimator}. Similarly a Laplace approximation \cite{laplace1774:integral,MacKay:92b,ritter2018:laplace} can be made resulting in a Delta Variance with $\frac{1}{N}\iHessian$.
Again those choices of $\Sigma$ are $\propto\frac{1}{N}$.

\subsection{Frequentist Interpretation}
\label{sec:delta_frequentist_derivation}
To better cater to complex function approximators such as neural networks this section discusses two frequentist derivations of the Delta Variance, which rely on milder assumptions: As they are frequentist they do not consider posterior distributions. This allows us to side-step any questions about the shape and tractability of posterior distributions for neural networks. 
In the first case (leave-one-out interpretation) we even do not require global convexity or a unique optimum. Convergence of the parameters to some local optimum together with locally bounded second derivatives is sufficient.

\subsubsection{Leave-One-Out Variance}
\label{sec:delta_analysis_loo_variance}
In Proposition~\ref{prop:delta_infinitessimal} we observe that the Delta Variance computes an infinitesimal approximation to the leave-one-out variance (see Definition~\ref{def:delta_infinitessimal_variance}) for choice of $\PosteriorCovarianceMatrix=\iHessian \hat{F}_f \iHessian$:

$$\underbrace{\quadForm[\PosteriorCovarianceMatrix]{\deltaUz}}_{\text{Delta Variance}} = \underbrace{\Variance[\theta \sim IJ]{\derivedQtheta(z)}}_{\text{Infinitesimal LOO Variance}} \approx \underbrace{\Variance[\theta \sim LOO]{\derivedQtheta(z)}}_{\text{LOO Variance}} $$
The infinitesimal approximation to the leave-one-out variance (also known as the infinitesimal jackknife \cite{Jaeckel:1972InfinitesimalJackknife}) is defined as follows:
\begin{definition}
Let $\theta_i$ be the parameters resulting from training $\ftheta$ on data $\Data$ with a single data-point $x_i$ down-weighted from weight $\frac{1}{N}$ to $\frac{1-\epsilon}{N}$, then the \emph{$\epsilon$-Leave-One-Out Variance} is defined as
$$ \Variance[\theta \sim \epsilon-LOO]{\derivedQtheta(z)} \defeq \frac{N}{\epsilon^2} \Variance[\iinN]{\derivedQ_{\theta_i}(z)} $$
\end{definition}

\begin{definition}
\label{def:delta_infinitessimal_variance}
With slight abuse of notation we define the \emph{Infinitesimal LOO Variance} as the limit of the $\epsilon$-Leave-One-Out Variance:
$$ \Variance[IJ]{\derivedQtheta(z)} \defeq \lim_{\epsilon \to 0} \Variance[\theta \sim \epsilon-LOO]{\derivedQtheta(z)}$$
\end{definition}

\begin{proposition}
\label{prop:delta_infinitessimal}
The Delta Variance equals the infinitesimal LOO Variance for $\PosteriorCovarianceMatrix=\frac{1}{N}\iHessian \hat{F}_f \iHessian$:
$$ \Variance[IJ]{\derivedQtheta(z)} = \quadForm[\PosteriorCovarianceMatrix]{\deltaUz} $$
In particular
$$ \Variance[\theta \sim \epsilon-LOO]{\derivedQtheta(z)} = \quadForm[\PosteriorCovarianceMatrix]{\deltaUz} + \bigoh\left(\frac{\epsilon}{N^2}\right)  $$
\end{proposition}
\begin{proof}
See appendix.
\end{proof}

\subsubsection{Bootstrapped Variance}
\label{sec:delta_analysis_boot_variance}
Here we explain how statistical bootstrapping can be approximated using Delta Variances. 
In Proposition~\ref{prop:delta_boostrapping} we observe that the Delta Variance computes an infinitesimal approximation to statistical bootstrapping for choice of $\PosteriorCovarianceMatrix=\frac{1}{N} \iHessian \hat{F}_f \iHessian$:
$$\underbrace{\quadForm[\PosteriorCovarianceMatrix]{\deltaUz}}_{\text{Delta Variance}} = \underbrace{\Variance[\theta \sim IB]{\derivedQtheta(z)}}_{\text{Infinitesimal Bootstrap Variance}} \approx \underbrace{\Variance[\theta \sim Bootstrap]{\derivedQtheta(z)}}_{\text{Bootstrap Variance}} $$
Bootstrapping trains a sequence of new models on random subsets of the dataset and evaluates their distribution in lieu of the Bayesian posterior. For example it may  train new models on random halves of the dataset. This procedure corresponds to assigning each data-point a random weight $w$ of either $0$ or $2$ with equal probability. Here we consider a generalization  
where $w$ is $1-\epsilon$ or $1+\epsilon$ which resembles the halving procedure for $\epsilon$ equals one. Interestingly this bootstrapping scheme can be approximated using Delta Variances.

\begin{definition}
Let random variable $\thetaRademacher$ be the parameters obtained from training $\ftheta$ on data $\Data$ with each data-point $x\in \Data$ re-weighted iid. randomly to either $\frac{1-\epsilon}{N}$ or $\frac{1+\epsilon}{N}$. Let those weights be described using vector $r$ (where all $r_i$ are Rademacher distributed): $\frac{1+r_i\epsilon}{N}$,
    then
then the \emph{$\epsilon$-Bootstrapped Variance} is defined as
$$ \Variance[\theta \sim \epsilon-Bootstrap]{\derivedQtheta(z)} \defeq \frac{1}{\epsilon^2} \Variance[\bootRademacher]{\derivedQ_{\thetaRademacher}(z)} $$
\end{definition}

\begin{proposition}
\label{prop:delta_boostrapping}
The Delta Variance equals the $\epsilon$-Bootstrapped Variance with a diminishing approximation error for $\PosteriorCovarianceMatrix=\frac{1}{N} \iHessian \hat{F}_f \iHessian$:
$$ \Variance[\theta \sim \epsilon-Bootstrap]{\derivedQtheta(z)} = \quadForm[\PosteriorCovarianceMatrix]{\deltaUz} + \bigoh\left( \frac{\epsilon}{N^{2.5}} \right)$$
\end{proposition}
\begin{proof}
    Appendix
\end{proof}

\subsection{Adversarial Data Interpretation}
\label{sec:delta_adverserial_derivation}
Sometimes it is of interest to quantify how much a prediction changes if the training dataset is subject to adversarial data injection. 
Intuitively this is connected to epistemic uncertainty: one may argue that predictions are more robust the more certain we are about their parameters and vice versa.
In Appendix~\ref{sec:delta_adverserial_derivation_appendix} we show that this intuition also holds mathematically.
In particular we observe that:
\begin{enumerate}
    \item The Delta Variance with $\PosteriorCovarianceMatrix=\frac{1}{N}\iHessian$ computes how much a quantity of interest $\derivedQtheta(z)$ changes if an adversarial data-point is injected.
    \item This adversarial interpretation is technically equivalent to the Laplace Posterior approximation (from Section~\ref{sec:delta_bayesian_derivation}) -- even though interestingly both start with different assumptions and objectives. 
\end{enumerate}

\subsection{Out-of-Distribution Interpretation}
\label{sec:delta_OOD_derivation}
We show that a large Delta Variance of $\derivedQtheta(z)$ implies that its input $z$ is out-of-distribution with respect to the training data. This relates to epistemic uncertainty intuitively: a model is likely to be uncertain about data-points that differ from its training data. The derivation in Section~\ref{sec:delta_OOD_derivation_appendix} is based on the Mahalanobis Distance \cite{Mahalanobis:1936Distance} -- a classic metric for out of distribution detection.
It accounts for the choice of $\ftheta\neq\derivedQtheta$ and relies on minimal assumptions only requiring existence of gradients and that the training of $\ftheta$ has converged.

\section{Experiments}
\label{sec:delta_experiments}
To empirically study the Delta Variance we build on the state-of-the-art GraphCast weather forecasting system \cite{lam:2023GraphCast} which trains a neural network $\fthetax$ to predict the weather 6 hours ahead. This $\fthetax$ is then iterated multiple times to make predictions up to 10 days into the future. We define various quantities of interest $\derivedQtheta$ such as the average rainfall in an area or the expected power of a wind turbine at a particular location and compute their Epistemic Variance. We assess the Epistemic Variance predictions on 4 years of hold-out data using multiple metrics such as the correlation between predicted variance and prediction error and the likelihood of the quantities of interest. Empirically Delta Variances with a diagonal Fisher approximation yield competitive results at lower computational cost -- see Figure~\ref{fig:delta_pareto_quality_vs_inference}.
Next we give an overview on the experimental methodology -- please consider the appendix for more technical details.

\begin{figure}[ht!]
    \centering
    \includegraphics[width=.45\textwidth]{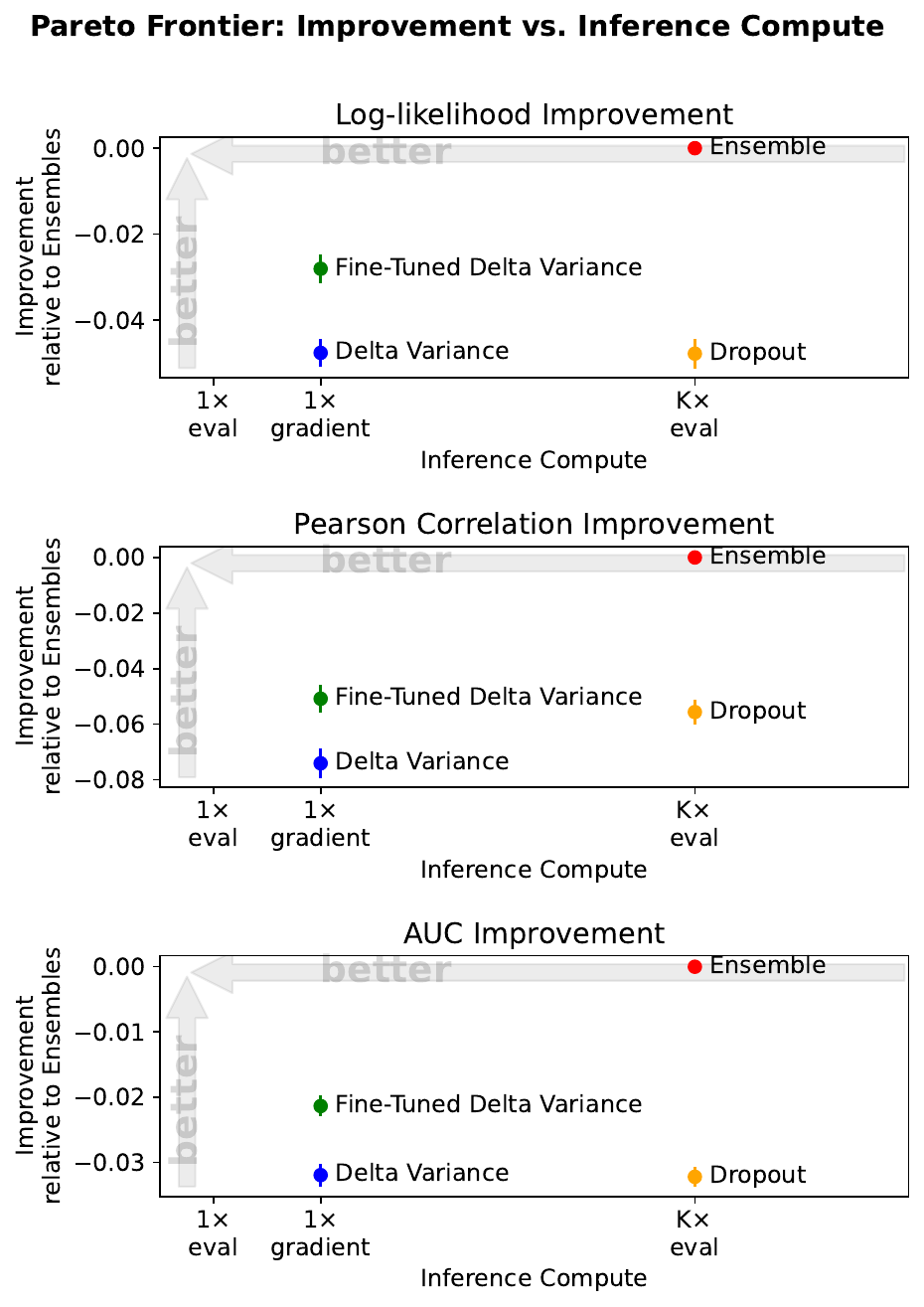}
    \caption{
    Comparison of variance estimators in terms of their inference cost and prediction quality. The quantities of interest are based on the GraphCast \cite{lam:2023GraphCast} weather prediction system that iterates a learned neural network dynamics model to form predictions.
    We evaluate the selected variance estimators based on three different evaluation criteria (Log-likelihood, correlation to prediction error and AUC akin to \citet{Amersfoort:2020UncertaintyDUQ}). Lines indicate 2 standard errors.
    Delta Variances yield similar results as popular alternatives for lower computational cost.
    On average ensembles achieve the highest quality and Delta Variances the lowest computational overhead. See Section~\ref{sec:delta_learning_sigma} for the fine-tuned Delta Variance.
    }
    \label{fig:delta_pareto_quality_vs_inference}
\end{figure}

\subsection{Weather Forecasting Benchmark}

\paragraph{GraphCast Training}
We build on the state-of-the-art GraphCast weather prediction system. It trains a graph neural network to predict the global weather state 6 hours into the future. This step function $x_{t+1}=\ftheta(x_t)$ is then iterated to predict up to 10 days into the future. The global weather state $x$ is represented as a grid with 5 surface variables and 6 atmospheric variables at 37 levels of altitude \citep[see][for details]{lam:2023GraphCast}. The authors consider a grid-sizes of $0.25$ degrees. To save resources we retrain the model for a grid size of $4$ degrees and reduce the number of layers and latents each by factor a of $2$.
Finally we skip the fine-tuning curriculum for simplicity. 
Besides the graph neural network we also consider a standard convolutional neural network. Training data ranges from 1979-2013 with validation data from 2014-2017 and holdout data from 2018-2021 resulting in about 100 GB of weather data.

\paragraph{Quantities of Interest}
First we define $126$ different quantities of interest $\derivedQtheta$ based on $4$ topics that we evaluate on the hold-out data (2018-2021) for two different neural network architectures: 1) Precipitation at various times into the future. 2) Inspired by wind turbine energy yield we measure the third power of wind-speed at various times into the future. 3) Inspired by flood risk we measure precipitation averaged over areas of increasing size five days into the future. 4) Inspired by health emergencies we predict the maximum temperature maximized over areas of increasing size five days into the future.
The first two quantities are predicted $1, \ldots, 5$ days ahead. The last two are measured 5 days ahead in quadratic areas with inradii ranging from $1$ to $6$. These measurements take place at $6$ preselected capital cities. 
Finally note that we never train $\derivedQtheta$ as it can be derived using $\ftheta$.

\paragraph{Evaluation Methodology}
The data from 2018-2021 is held out for evaluation resulting in approximately $6 \times 10^3$ different (input, target-value) pairs for each of the 252 quantities of interest $\derivedQtheta$. For each pair we obtain a prediction error $|y-\derivedQtheta(z)|$ and corresponding variance predictions $\nu(z)$.
Unfortunately many practical applications do not admit ground truth values for Epistemic Variance that one could compare variance estimators to. Instead there are multiple popular approaches in the literature relying on the prediction error, which is subject to both epistemic and aleatoric uncertainty.
In Figure~\ref{fig:delta_pareto_quality_vs_inference} we consider multiple such different criteria: 

\begin{enumerate}
    \item Akin to \citet{Amersfoort:2020UncertaintyDUQ} AUC considers how fast the average $L_1$ error decreases when data-points are removed from the dataset -- in the order of their largest predicted Epistemic Variance. 
    \item  We consider the Pearson correlation between absolute error and predicted epistemic standard deviation: $corr(|y-\derivedQtheta(z)|, \sqrt{\nu(z)})$.
    \item To evaluate the Log-likelihood of observations $y$ we interpret $\derivedQtheta(z)$ as the mean of a Laplace distribution with variance derived from the predicted Epistemic Variance $\nu(z)$. We parameterize the Laplace distribution such that its variance decomposes in a constant $\alpha$ and the predicted Epistemic Variance $\nu(z)$ scaled by $\beta$. Intuitively $\alpha$ represents the aleatoric variance and $\beta\nu(z)$ represents the Epistemic Variance: $\mathrm{Laplace}(\mu=\derivedQtheta(z), 2b^2=\alpha + \beta\nu(z))$. Both $\alpha$ and $\beta$ are learned on the validation data (2014-2017) that is used for hyper-parameter selection. We then observe how well it models the actual observed target values $y$ from the evaluation data (2018-2021).
\end{enumerate}

Finally to reduce variance we define the \emph{Improvement} of a variance estimator as the difference of its score to the score obtained by the ensemble estimator. Intuitively this indicates the loss in Quality when using an estimator in place of an ensemble. This procedure is repeated for each of the 252 quantities of interest $\derivedQtheta$.

\section{Illustrations and Extensions}
\label{sec:delta_illustations_and_extensions}
To highlight the generality of our approach we illustrate two extensions in this section.
\begin{enumerate}
    \item By learning $\PosteriorCovarianceMatrix$ to represent uncertainty well, we generalize the parametric from of Delta Variances beyond Fisher and Hessian matrices and observe improved results in the GraphCast benchmark -- see Figure~\ref{fig:delta_pareto_quality_vs_inference}.
    \item We consider an example where $\derivedQtheta$ is not an explicit function but maps to a fixed-point of an iterative algorithm. We observe that it is possible to compute the Delta Variance of fixed-points using the implicit function theorem.
    Applied to an eigenvalue solver we observe empirically that the Delta Variance yields reasonable uncertainty estimates -- see Figure~\ref{fig:delta_finite_element_eigenvalue}.
\end{enumerate}

\subsection{Learning $\mathbf{\PosteriorCovarianceMatrix}$}
\label{sec:delta_learning_sigma}
In Section~\ref{sec:delta_analysis} we observed that Delta Variances with special $\PosteriorCovarianceMatrix$ such as the Fisher Information approximate theoretically established measures of uncertainty. In this section we observe that $\PosteriorCovarianceMatrix$ may also be learned or fine-tuned. In an illustrative example we differentiate the Delta Variances with respect to $\PosteriorCovarianceMatrix$ and use gradient descent to obtain an improved $\PosteriorCovarianceMatrix$.
This may be helpful to improve the uncertainty prediction or to improve a downstream use-case if the variance is used in a larger system.

\paragraph{Fine-Tuning $\mathbf{\PosteriorCovarianceMatrix}$ Example}
We present a simple instance of fine-tuning a few parameters of $\PosteriorCovarianceMatrix$, which empirically yields improved results -- see Figure~\ref{fig:delta_pareto_quality_vs_inference}. Note that $\Sigma$ is approximated block-diagonally in most practical cases to limit the computational requirements -- with one block for each weight vector in each neural network layer. Hence the Delta Variance splits into a sum of per-block Delta Variances derived from per-block gradients $\Delta_i$:
$$ \quadForm[\PosteriorCovarianceMatrix]{\deltaF} = \sum_i  \quadForm[\PosteriorCovarianceMatrix_i]{\Delta_i}$$
In this example we introduce a factor to rescale $\Sigma$ within each block. Intuitively this adjusts the importance of each layer. 
Since only a few parameters need to be estimated we only need little fine-tuning data. This is applicable in situations where there is a small amount of training data for $\derivedQtheta$. In our experiments we optimize the coefficients of this linear combination using gradient descent to improve the log-likelihood or correlation on a small set of held-out validation data. Note that the per-layer variances can be cached which reduces the optimization problem significantly.

\subsection{Epistemic Variance of Iterative Algorithms and Implicit Functions}

So far we considered quantities of interest $\derivedQtheta$ that are explicit functions of the parameters $\theta$. Here we consider an example where the quantity of interest is an implicit function: $\derivedQtheta$ maps to the fixed-point (solution) of an iterative algorithm for which there is no closed-form formula that we could differentiate to obtain its gradient.

Given some initial point $\ziterate_0$ the iteration  $\ziterate_{k+1}=\fixedpointSelfFunction_\theta(\ziterate_{k})$ may converge to a fixed-point $\zfixed_\theta$ that depends on the parameters $\theta$. To estimate $\Variance[\theta]{\zfixed_\theta}$ we need to define $\derivedQtheta$ as follows, which can not be differentiated with regular back-propagation due to the limit
$$ \derivedQtheta(\ziterate_0) \defeq \lim_{k\to\infty} \underbrace{\fixedpointSelfFunction_\theta \circ \dots \circ \fixedpointSelfFunction_\theta}_{k \text{ times}}(\ziterate_0) = \zfixed_\theta
 $$

\paragraph{Implicit Epistemic Variance Calculation}
To compute the Delta Variance of an implicitly defined $\derivedQtheta$ we need its gradient $\gradtheta\derivedQtheta$. This can be obtained under mild conditions using the implicit function theorem.
Let us denote $\ziterate_{k+1}=\fixedpointSelfFunction_\theta(\ziterate_{k})$ any fixed-point iteration converging to $\zfixed$ with the corresponding non-linear equation $\fixpointZeroFunction_\theta(\ziterate)\defeq\fixedpointSelfFunction_\theta(\ziterate)-\ziterate=0$. The implicit function theorem yields the gradient of $\derivedQtheta$ by considering the Jacobian of $\fixpointZeroFunction$ at the fixed-point $\zfixed$:
\begin{align*}
    \Delta_{\zfixed} \defeq \gradtheta &\derivedQtheta = -\left(\nabla_{\zfixed} \fixpointZeroFunction_\theta(\zfixed)  \right)^{-1}\gradtheta \fixpointZeroFunction_\theta(\zfixed)
\end{align*}
whenever $\fixpointZeroFunction$ is continuously differentiable and the inverse of $\nabla_{\zfixed} \fixpointZeroFunction_\theta(\zfixed)$ exists. Now we can compute the Epistemic Variance as $\Variance[\theta]{\zfixed}\approx\quadForm[\PosteriorCovarianceMatrix]{\Delta_{\zfixed}}$.

\paragraph{Eigenvalue Example}
Eigenvalues are a quantity of interest in structural engineering. As an illustrative example we consider the eigenvalues $\lambda_i(\EigenvalueExampleMatrix_\theta)$ of a finite element model matrix $\EigenvalueExampleMatrix_\theta=M_\theta^{-1}K_\theta$ that indicates the stability of a physical structure. If the parameters $\theta$ are uncertain the eigenvalues will be uncertain as well. Recall that they are the solutions to $\det(\EigenvalueExampleMatrix_\theta-\lambda I)=0$. We can estimate the Epistemic Variance of an eigenvalue $\Variance[\theta]{\lambda_i(\EigenvalueExampleMatrix_\theta)}$ using Delta Variances if we obtain the gradient of the eigenvalue $\gradtheta \lambda_i(\EigenvalueExampleMatrix_\theta)$. To this extend we need the implicit function theorem as $\lambda_i(\EigenvalueExampleMatrix_\theta)$ is an implicitly defined function -- please consider the appendix for technical details. 

\begin{figure}[ht]
    \centering
    \includegraphics[width=.45\textwidth]{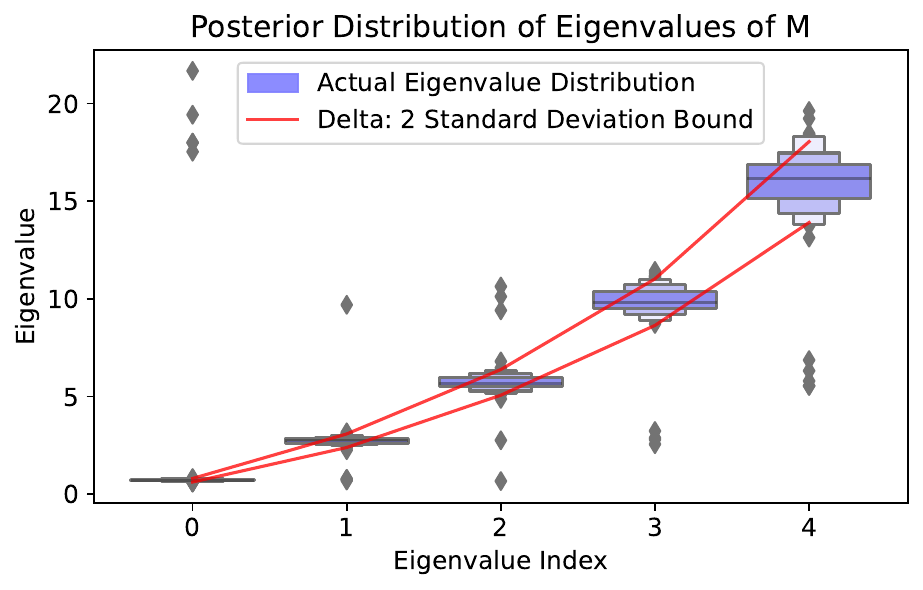}
    \caption{To investigate more intricate quantities of interest, we consider the mapping from a matrix $\EigenvalueExampleMatrix_\theta$ to its eigenvalue $\derivedQtheta=\lambda_i(\EigenvalueExampleMatrix_\theta)$. This function is not explicit and computed using iterative algorithms, but we can use the implicit function approach to estimate the Delta Variance. Here $\EigenvalueExampleMatrix_\theta$ is an illustrative finite-element problem with 11-dimensional parameters $\theta$ and 5 eigenvalues.
    }
    \label{fig:delta_finite_element_eigenvalue}
\end{figure}

\section{Related Work}
The proposed Delta Variance family bridges and extends Bayesian, frequentist and heuristic notions of variance. Furthermore it generalizes related work by considering explicit and implicit quantities of interest other than the neural network itself $\derivedQtheta \neq \ftheta$ and permits learning improved covariances $\PosteriorCovarianceMatrix$ -- see Section~\ref{sec:delta_illustations_and_extensions}. Below we give a brief historic account of related methods that mostly consider the $\derivedQtheta=\ftheta$ case.

\paragraph{Delta Method}
The Delta Method dates back to \citet{cotes:1722Harmonia}, \citet{Lambert:1765Beytraege} and \citet{gauss:1823ErrorTheory} in the context of error propagation and received a modern treatment by \citet{Kelley:1928CrossroadsMind,Wright:1934PathCoefficients, Doob:1935LimitingDistributions,Dorfman:1938note} -- see~\citet{Gorroochurn:2020WhoInventedDelta} for a historical account. \citet{Denker:1990Transforming} apply the Delta Method to the outputs of neural networks $\ftheta(x)$ and \citet{Nilsen:2022DeltaMethod} improves computational efficiency. When applied to neural networks the Delta Method requires strong assumptions about the posterior (e.g. unique optimum) or training process, which have not been proven to hold.
Delta Variances --  named after the Delta Method -- provide multiple alternative theoretical justification through its unifying perspective. Furthermore Delta Variances generalize to the $\derivedQtheta\neq\ftheta$ case and other $\PosteriorCovarianceMatrix$.

\paragraph{Laplace Approximation}
Building on work by \citet{Gull:1989MaximumEntropy}, \citet{MacKay:92b} and \citet{ritter2018:laplace} apply the Laplace approximation to neural networks. 
Approximating functions at an optimum by what should later be called a Gaussian distribution dates back to \citet{laplace1774:integral}. 
While only applicable to a single optimum \citet{MacKay:92b} heuristically argues for its applicability to posterior distributions of neural networks. Given such Gaussian posterior approximation they apply the Delta Method yielding a special instance of the Delta Variance family with $\derivedQtheta=\ftheta$ and $\Sigma=\iHessian$ -- see Section~\ref{sec:delta_bayesian_derivation}.

\paragraph{Influence Functions and Jackknife Methods}
Influence functions were proposed in \citet{Hampel:1974Influence} concurrently with the closely related Infinitesimal Jackknife by \citet{Jaeckel:1972InfinitesimalJackknife} which approximates cross validation \cite{Quenouille:1949Correlation}. \citet{Koh:2017UnderstandingBlackBox} apply the influence function analysis to neural networks to evaluate how training data influences predictions. In Sections~\ref{sec:delta_frequentist_derivation} and \ref{sec:delta_adverserial_derivation} we apply similar techniques to general quantities of interest different from $\ftheta$.

\paragraph{Statistical Bootstrapping}
Bootstrapping was proposed by \citet{Efron:1979} as an improvement to the Jackknife method, which can be seen as a linearization of the Bootstrapping procedure. We build on this connection and translate it to quantities of interest. \citet{Bickel1981:Bootstrap} provided theoretical justification for a wide class of bootstrapped statistics and estimators. Bootstrapping has been applied heuristically to neural networks by \citet{Lakshminarayanan2017:ensembles}.

\paragraph{Uncertainty Estimation for Deep Neural Networks}
We focus our comparison on two popular methods: \citet{Lakshminarayanan2017:ensembles} train multiple neural networks to form an ensemble and \citet{Gal2016:dropout} which re-interprets Monte-Carlo dropout as variational inference. In Table~\ref{tab:delta_benefits} we compare their properties with Delta Variances observing that they come at larger inference cost.
\citet{Osband2023:EpistemicNeuralNetworks} aims to reduce the training costs of ensemble methods. To this extent they change the neural network architecture and training procedure, however how to reduce the remaining $k$-fold inference cost and memory requirements remain open research questions. Other popular methods come with similar requirements to change the architecture or training procedure \citep{blundell2015:BBB,Amersfoort:2020UncertaintyDUQ,Immer:2021LocalLinearization},
while approaches like \citet{Sun:2022OOD} are of non-parametric flavour exhibiting inference cost that increases with the dataset size. 
SWAG \cite{Maddox2019:SWAG} reduces the training and memory cost by considering an ensemble of parameters from a single learning trajectory with stochastic gradient descent and approximating it with a Gaussian posterior. For inference they employ expensive k-fold sampling. We note that it is natural to derive a SWAG-inspired Delta Variances that employs the $\Sigma$ from SWAG inside the computationally efficient Delta Variance formula -- we leave those considerations for future research.
Finally
\citet{Kallus2022:Implicit} propose a Delta Method inspired approach to approximate Epistemic Variance with an ensemble of two predictors and \citet{Schnaus:2023LearningPriors} learn scale parameters in Gaussian prior distributions for transfer learning.

\section{Conclusion}
We have addressed the question of how the uncertainty from limited training data affects the computation of downstream predictions in a system that relies on learned components. 
To this extent we proposed the Delta Variance family, which unifies and extends multiple related approaches theoretically and practically. 
We discussed how the Delta Variance family can be derived from six different perspectives (including Bayesian, frequentist, adversarial robustness and out-of-distribution detection perspectives) highlighting its wide theoretical support and providing a unifying view on those perspectives.
Next we presented extensions and applications of the Delta Variance family such as its compatibility with implicit functions and ability to be improved through fine-tuning. Finally an empirical validation on a state-of-the-art weather forecasting system shows that Delta Variances yield competitive results more efficiently than other popular approaches.

\section*{Acknowledgements} 
We would like to thank Remi Lam for his advice and support with GraphCast-related questions. Furthermore we would like to thank Mark Rowland, Wojciech M. Czarnecki, Danilo J Rezende, Iurii Kemaev and the anonymous AAAI 2025 reviewers for their valuable feedback.

\bibliography{aaai25}

\begin{thebibliography}{53}
\providecommand{\natexlab}[1]{#1}

\bibitem[{Auer, Cesa-Bianchi, and Fischer(2002)}]{Auer:2002}
Auer, P.; Cesa-Bianchi, N.; and Fischer, P. 2002.
\newblock Finite-time analysis of the multiarmed bandit problem.
\newblock \emph{Machine learning}, 47(2-3): 235--256.

\bibitem[{Bernstein(1917)}]{Bernstein:TheoryProbabilities}
Bernstein, S.~N. 1917.
\newblock \emph{The Theory of Probabilities}.

\bibitem[{Bickel and Freedman(1981)}]{Bickel1981:Bootstrap}
Bickel, P.~J.; and Freedman, D.~A. 1981.
\newblock Some Asymptotic Theory for the Bootstrap.
\newblock \emph{The Annals of Statistics}, 9(6): 1196--1217.

\bibitem[{Blundell et~al.(2015)Blundell, Cornebise, Kavukcuoglu, and Wierstra}]{blundell2015:BBB}
Blundell, C.; Cornebise, J.; Kavukcuoglu, K.; and Wierstra, D. 2015.
\newblock Weight Uncertainty in Neural Network.
\newblock In Bach, F.; and Blei, D., eds., \emph{Proceedings of the 32nd International Conference on Machine Learning}, volume~37 of \emph{Proceedings of Machine Learning Research}, 1613--1622. Lille, France: PMLR.

\bibitem[{Cook and Weisberg(1982)}]{cook:1982residuals}
Cook, R.~D.; and Weisberg, S. 1982.
\newblock \emph{Residuals and Influence in Regression}.
\newblock Monographs on Statistics \& Applied Probability. Chapman \& Hall.
\newblock ISBN 9780412242809.

\bibitem[{Cotes(1722)}]{cotes:1722Harmonia}
Cotes, R. 1722.
\newblock \emph{Harmonia Mensurarum}.
\newblock Robert Smith.

\bibitem[{de~Veaux et~al.(1998)de~Veaux, Schumi, Schweinsberg, and Ungar}]{Veaux:1998PredictionIntervals}
de~Veaux, R.~D.; Schumi, J.; Schweinsberg, J.; and Ungar, L.~H. 1998.
\newblock Prediction Intervals for Neural Networks via Nonlinear Regression.
\newblock \emph{Technometrics}, 40(4): 273--282.

\bibitem[{Denker and LeCun(1990)}]{Denker:1990Transforming}
Denker, J.; and LeCun, Y. 1990.
\newblock Transforming Neural-Net Output Levels to Probability Distributions.
\newblock In Lippmann, R.; Moody, J.; and Touretzky, D., eds., \emph{Advances in Neural Information Processing Systems}, volume~3. Morgan-Kaufmann.

\bibitem[{Doob(1935)}]{Doob:1935LimitingDistributions}
Doob, J.~L. 1935.
\newblock The limiting distributions of certain statistics.
\newblock \emph{Ann. Math. Stat.}, 6(3): 160--169.

\bibitem[{Dorfman(1938)}]{Dorfman:1938note}
Dorfman, R. 1938.
\newblock A note on the delta-method for finding variance formulae.
\newblock \emph{Biometric Bulletin}.

\bibitem[{Duff(2002)}]{Duff:2002}
Duff, M. 2002.
\newblock \emph{Optimal Learning: Computational procedures for {Bayes}-adaptive {Markov} decision processes}.
\newblock Ph.D. thesis, University of Massachusetts Amherst.

\bibitem[{Efron(1979)}]{Efron:1979}
Efron, B. 1979.
\newblock Bootstrap methods: another look at the jackknife.
\newblock \emph{The annals of Statistics}, 7(1): 1--26.

\bibitem[{Freedman(2006)}]{Freedman:2006SandwichEstimator}
Freedman, D.~A. 2006.
\newblock On The So-Called “Huber Sandwich Estimator” and “Robust Standard Errors”.
\newblock \emph{The American Statistician}, 60(4): 299--302.

\bibitem[{Gal and Ghahramani(2016)}]{Gal2016:dropout}
Gal, Y.; and Ghahramani, Z. 2016.
\newblock Dropout as a Bayesian Approximation: Representing Model Uncertainty in Deep Learning.
\newblock In Balcan, M.~F.; and Weinberger, K.~Q., eds., \emph{Proceedings of The 33rd International Conference on Machine Learning}, volume~48 of \emph{Proceedings of Machine Learning Research}, 1050--1059. New York, New York, USA: PMLR.

\bibitem[{Gauss(1823)}]{gauss:1823ErrorTheory}
Gauss, C. 1823.
\newblock \emph{Theoria combinationis observationum erroribus minimis obnoxiae}.
\newblock H. Dieterich.

\bibitem[{Gorroochurn(2020)}]{Gorroochurn:2020WhoInventedDelta}
Gorroochurn, P. 2020.
\newblock Who invented the delta method, really?
\newblock \emph{Math. Intelligencer}, 42(3): 46--49.

\bibitem[{Gull(1989)}]{Gull:1989MaximumEntropy}
Gull, S.~F. 1989.
\newblock \emph{Developments in Maximum Entropy Data Analysis}, 53--71.
\newblock Dordrecht: Springer Netherlands.
\newblock ISBN 978-94-015-7860-8.

\bibitem[{Hampel(1974)}]{Hampel:1974Influence}
Hampel, F.~R. 1974.
\newblock The Influence Curve and Its Role in Robust Estimation.
\newblock \emph{Journal of the American Statistical Association}, 69(346): 383--393.

\bibitem[{Heger(1994)}]{Heger:94}
Heger, M. 1994.
\newblock Consideration of Risk in Reinforcement Learning.
\newblock In \emph{Machine Learning: Proceedings of the 11th International Conference}, 105--111. Morgan Kaufmann Publishers, San Francisco, CA.

\bibitem[{Hodges(1967)}]{Hodges1967:Efficiency}
Hodges, J.~L. 1967.
\newblock Efficiency in normal samples and tolerance of extreme values for some estimates of location.
\newblock In \emph{Proceedings of the Fifth Berkeley Symposium on Mathematical Statistics and Probability}, volume~1, 163--–186. Berkeley. University of California Press.

\bibitem[{Hwang and Ding(1997)}]{Hwang:1997PredictionIntervals}
Hwang, J. T.~G.; and Ding, A.~A. 1997.
\newblock Prediction Intervals for Artificial Neural Networks.
\newblock \emph{Journal of the American Statistical Association}, 92(438): 748--757.

\bibitem[{Immer, Korzepa, and Bauer(2021)}]{Immer:2021LocalLinearization}
Immer, A.; Korzepa, M.; and Bauer, M. 2021.
\newblock Improving predictions of Bayesian neural nets via local linearization.
\newblock In Banerjee, A.; and Fukumizu, K., eds., \emph{Proceedings of The 24th International Conference on Artificial Intelligence and Statistics}, volume 130 of \emph{Proceedings of Machine Learning Research}, 703--711. PMLR.

\bibitem[{Jaeckel(1972)}]{Jaeckel:1972InfinitesimalJackknife}
Jaeckel, L. 1972.
\newblock The Infinitesimal Jackknife.
\newblock \emph{Bell Lab. Memorandum}, MM72-1215-11.

\bibitem[{Kallus and McInerney(2022)}]{Kallus2022:Implicit}
Kallus, N.; and McInerney, J. 2022.
\newblock The Implicit Delta Method.
\newblock In Koyejo, S.; Mohamed, S.; Agarwal, A.; Belgrave, D.; Cho, K.; and Oh, A., eds., \emph{Advances in Neural Information Processing Systems}, volume~35, 37471--37483. Curran Associates, Inc.

\bibitem[{Kelley(1928)}]{Kelley:1928CrossroadsMind}
Kelley, T.~L. 1928.
\newblock \emph{Crossroads in the mind of man; a study of differentiable mental abilities}.
\newblock Palo Alto: Stanford Univ. Press.

\bibitem[{Kleijn and van~der Vaart(2012)}]{Kleijn:2012misspecifiedBvM}
Kleijn, B.; and van~der Vaart, A. 2012.
\newblock {The Bernstein-Von-Mises theorem under misspecification}.
\newblock \emph{Electronic Journal of Statistics}, 6(none): 354 -- 381.

\bibitem[{Koh and Liang(2017)}]{Koh:2017UnderstandingBlackBox}
Koh, P.~W.; and Liang, P. 2017.
\newblock Understanding Black-box Predictions via Influence Functions.
\newblock In Precup, D.; and Teh, Y.~W., eds., \emph{Proceedings of the 34th International Conference on Machine Learning}, volume~70 of \emph{Proceedings of Machine Learning Research}, 1885--1894. PMLR.

\bibitem[{Lakshminarayanan, Pritzel, and Blundell(2017)}]{Lakshminarayanan2017:ensembles}
Lakshminarayanan, B.; Pritzel, A.; and Blundell, C. 2017.
\newblock Simple and Scalable Predictive Uncertainty Estimation using Deep Ensembles.
\newblock In Guyon, I.; Luxburg, U.~V.; Bengio, S.; Wallach, H.; Fergus, R.; Vishwanathan, S.; and Garnett, R., eds., \emph{Advances in Neural Information Processing Systems}, volume~30. Curran Associates, Inc.

\bibitem[{Lam et~al.(2023)Lam, Sanchez-Gonzalez, Willson, Wirnsberger, Fortunato, Alet, Ravuri, Ewalds, Eaton-Rosen, Hu, Merose, Hoyer, Holland, Vinyals, Stott, Pritzel, Mohamed, and Battaglia}]{lam:2023GraphCast}
Lam, R.; Sanchez-Gonzalez, A.; Willson, M.; Wirnsberger, P.; Fortunato, M.; Alet, F.; Ravuri, S.; Ewalds, T.; Eaton-Rosen, Z.; Hu, W.; Merose, A.; Hoyer, S.; Holland, G.; Vinyals, O.; Stott, J.; Pritzel, A.; Mohamed, S.; and Battaglia, P. 2023.
\newblock Learning skillful medium-range global weather forecasting.
\newblock \emph{Science}, 382(6677): 1416--1421.

\bibitem[{Lambert(1765)}]{Lambert:1765Beytraege}
Lambert, J.~H. 1765.
\newblock \emph{Beytr{\"a}ge zum Gebrauche der Mathematik und deren Anwendung}, volume 1, chapter 13 of \emph{Beytr{\"a}ge zum Gebrauche der Mathematik und deren Anwendung}.
\newblock Verlag des Buchladens der Realschule.

\bibitem[{Laplace(1774)}]{laplace1774:integral}
Laplace, P.~S. 1774.
\newblock Mémoire sur la probabilité des causes par les événements.
\newblock \emph{Mémoires de Mathématique et de Physique}, 6.

\bibitem[{Le~Cam(1953)}]{lecam1953:bernsteinVonMises}
Le~Cam, L. 1953.
\newblock \emph{On some asymptotic properties of maximum likelihood estimates and related {Baye}'s estimates}.
\newblock University of California Press, Berkeley.

\bibitem[{MacKay(1992{\natexlab{a}})}]{MacKay:92c}
MacKay, D. J.~C. 1992{\natexlab{a}}.
\newblock Information-based objective functions for active data selection.
\newblock \emph{Neural Computation}, 4(2): 550--604.

\bibitem[{MacKay(1992{\natexlab{b}})}]{MacKay:92b}
MacKay, D. J.~C. 1992{\natexlab{b}}.
\newblock A Practical {Bayesian} Framework for Backpropagation Networks.
\newblock \emph{Neural Computation}, 4: 448--472.

\bibitem[{Maddox et~al.(2019)Maddox, Izmailov, Garipov, Vetrov, and Wilson}]{Maddox2019:SWAG}
Maddox, W.~J.; Izmailov, P.; Garipov, T.; Vetrov, D.~P.; and Wilson, A.~G. 2019.
\newblock A Simple Baseline for Bayesian Uncertainty in Deep Learning.
\newblock In Wallach, H.; Larochelle, H.; Beygelzimer, A.; d\textquotesingle Alch\'{e}-Buc, F.; Fox, E.; and Garnett, R., eds., \emph{Advances in Neural Information Processing Systems}, volume~32. Curran Associates, Inc.

\bibitem[{Magnus(1985)}]{Magnus:1985Eigenvalues}
Magnus, J.~R. 1985.
\newblock On Differentiating Eigenvalues and Eigenvectors.
\newblock \emph{Econometric Theory}, 1(2): 179--191.

\bibitem[{Mahalanobis(1936)}]{Mahalanobis:1936Distance}
Mahalanobis, P.~C. 1936.
\newblock On The Generalized Distance in Statistics.
\newblock \emph{Sankhyā: The Indian Journal of Statistics, Series A (2008-)}, 80: pp. S1--S7.

\bibitem[{Martens(2014)}]{Martens:2014}
Martens, J. 2014.
\newblock New perspectives on the natural gradient method.
\newblock \emph{CoRR}, abs/1412.1193.

\bibitem[{Martens and Grosse(2015)}]{Martens:2015}
Martens, J.; and Grosse, R.~B. 2015.
\newblock Optimizing Neural Networks with Kronecker-factored Approximate Curvature.
\newblock In \emph{Proceedings of the 32nd International Conference on Machine Learning}, volume~37, 2408--2417.

\bibitem[{Miller(1974)}]{Miller:1974Jackknife}
Miller, R.~G. 1974.
\newblock The Jackknife--A Review.
\newblock \emph{Biometrika}, 61(1): 1--15.

\bibitem[{Nilsen et~al.(2022)Nilsen, Munthe-Kaas, Skaug, and Brun}]{Nilsen:2022DeltaMethod}
Nilsen, G.~K.; Munthe-Kaas, A.~Z.; Skaug, H.~J.; and Brun, M. 2022.
\newblock Epistemic uncertainty quantification in deep learning classification by the Delta method.
\newblock \emph{Neural Networks}, 145: 164--176.

\bibitem[{Osband et~al.(2023)Osband, Wen, Asghari, Dwaracherla, IBRAHIMI, Lu, and Van~Roy}]{Osband2023:EpistemicNeuralNetworks}
Osband, I.; Wen, Z.; Asghari, S.~M.; Dwaracherla, V.; IBRAHIMI, M.; Lu, X.; and Van~Roy, B. 2023.
\newblock Epistemic Neural Networks.
\newblock In Oh, A.; Naumann, T.; Globerson, A.; Saenko, K.; Hardt, M.; and Levine, S., eds., \emph{Advances in Neural Information Processing Systems}, volume~36, 2795--2823. Curran Associates, Inc.

\bibitem[{Quenouille(1949)}]{Quenouille:1949Correlation}
Quenouille, M.~H. 1949.
\newblock Approximate Tests of Correlation in Time-Series.
\newblock \emph{Journal of the Royal Statistical Society. Series B (Methodological)}, 11(1): 68--84.

\bibitem[{Ritter, Botev, and Barber(2018)}]{ritter2018:laplace}
Ritter, H.; Botev, A.; and Barber, D. 2018.
\newblock A Scalable Laplace Approximation for Neural Networks.
\newblock In \emph{International Conference on Learning Representations}.

\bibitem[{Schnaus et~al.(2023)Schnaus, Lee, Cremers, and Triebel}]{Schnaus:2023LearningPriors}
Schnaus, D.; Lee, J.; Cremers, D.; and Triebel, R. 2023.
\newblock Learning Expressive Priors for Generalization and Uncertainty Estimation in Neural Networks.
\newblock In Krause, A.; Brunskill, E.; Cho, K.; Engelhardt, B.; Sabato, S.; and Scarlett, J., eds., \emph{Proceedings of the 40th International Conference on Machine Learning}, volume 202 of \emph{Proceedings of Machine Learning Research}, 30252--30284. PMLR.

\bibitem[{Sun et~al.(2022)Sun, Ming, Zhu, and Li}]{Sun:2022OOD}
Sun, Y.; Ming, Y.; Zhu, X.; and Li, Y. 2022.
\newblock Out-of-Distribution Detection with Deep Nearest Neighbors.
\newblock In Chaudhuri, K.; Jegelka, S.; Song, L.; Szepesvari, C.; Niu, G.; and Sabato, S., eds., \emph{Proceedings of the 39th International Conference on Machine Learning}, volume 162 of \emph{Proceedings of Machine Learning Research}, 20827--20840. PMLR.

\bibitem[{Tibshirani(1996)}]{Tibshirani:1996ErrorEstimates}
Tibshirani, R. 1996.
\newblock A Comparison of Some Error Estimates for Neural Network Models.
\newblock \emph{Neural Computation}, 8(1): 152--163.

\bibitem[{Tishby, Levin, and Solla(1989)}]{Tishby:1989InferenceProbabilities}
Tishby, N.; Levin, E.; and Solla, S. 1989.
\newblock Consistent inference of probabilities in layered networks: Predictions and generalization.
\newblock In Anon, ed., \emph{IJCNN Int Jt Conf Neural Network}, 403--409. Publ by IEEE.
\newblock IJCNN International Joint Conference on Neural Networks ; Conference date: 18-06-1989 Through 22-06-1989.

\bibitem[{Tukey(1958)}]{Tukey:1958BiasConfidence}
Tukey, J.~W. 1958.
\newblock Bias and confidence in not-quite large samples (abstract).
\newblock \emph{j-ANN-MATH-STAT}, 29(2): 614--614.

\bibitem[{Van~Amersfoort et~al.(2020)Van~Amersfoort, Smith, Teh, and Gal}]{Amersfoort:2020UncertaintyDUQ}
Van~Amersfoort, J.; Smith, L.; Teh, Y.~W.; and Gal, Y. 2020.
\newblock Uncertainty estimation using a single deep deterministic neural network.
\newblock In \emph{Proceedings of the 37th International Conference on Machine Learning}, ICML'20. JMLR.org.

\bibitem[{van~der Vaart(1998)}]{vandervaart1998:asymptotic}
van~der Vaart, A.~W. 1998.
\newblock \emph{Asymptotic Statistics}.
\newblock Cambridge University Press.

\bibitem[{von Mises(1931)}]{vonMises:1931}
von Mises, R. 1931.
\newblock \emph{Wahrscheinlichkeitsrechnung und ihre {Anwendung} in der {Statistik} und theoretischen {Physik}}, volume~1.
\newblock Franz Deuticke.

\bibitem[{Wright(1934)}]{Wright:1934PathCoefficients}
Wright, S. 1934.
\newblock The method of path coefficients.
\newblock \emph{Ann. Math. Stat.}, 5(3): 161--215.

\end{thebibliography}

\appendix
\section*{Appendix}

\section{Further Analysis}

\subsection{Adversarial Data Interpretation}
\label{sec:delta_adverserial_derivation_appendix}

\newcommand{\deltaAdversarial}{\delta}

Sometimes it is of interest to quantify how much a prediction changes if the training dataset is subject to adversarial data injection. 
Intuitively this is connected to epistemic uncertainty: one may argue that predictions are more robust the more certain we are about their parameters and vice versa.
In this section we will observe that this intuition also holds mathematically.
In particular we will observe that:
\begin{enumerate}
    \item The Delta Variance with $\PosteriorCovarianceMatrix=\frac{1}{N}\iHessian$ computes how much a quantity of interest $\derivedQtheta(z)$ changes if an adversarial data-point is injected.
    \item This adversarial interpretation is technically equivalent to the Laplace Posterior approximation (from Section~\ref{sec:delta_bayesian_derivation}) -- even though interestingly both start with different assumptions and objectives. 
\end{enumerate}
At first we need to generalize the notion of adversarial robustness to the general case of $f\neq\derivedQ$:
We consider the hypothetical scenario where an adversarial fine-tuning-like step on $\derivedQtheta$ is included into the regular training of $\ftheta$. We then quantify the worst-case error this may introduce.
The hypothetical worst-case training scenario then includes an additional data-point $z$ with adversarially selected target value $y$. Then $\theta$ is optimized to minimize prediction error of $\fthetax$ on the training data $\Data$ as well as the $\frac{\epsilon}{N}$-weighted $L_2$ loss for $\derivedQtheta$: I.e. the term $\frac{\epsilon}{2N}(\derivedQtheta(z)-y)^2$ is added to the training loss.

We consider two corruptions where an adversary injects a bad data-point at the very input $z$ that we are interested to evaluate $\derivedQtheta(z)$ at. First adding a data-point $z$ with adversarially selected value offset and second adding a data-point $z$ with noisy value:
\begin{definition}
\label{def:delta_adversarial_offset}
We call a data-point $(z, y)$ \emph{$\deltaAdversarial$-adversarial} for $\derivedQtheta(z)$ if its target value $y$ deviates from the current prediction $\derivedQtheta(z)$ by some arbitrarily large value $\deltaAdversarial$.
\end{definition}
\newcommand{\sigmaAdversarial}{\sigma}
\begin{definition}
\label{def:delta_adversarial_noise}
Similarly we call an data-point $(z, y)$ \emph{$\sigmaAdversarial$-noise-adversarial} for $\derivedQtheta(z)$ if its target value $y$ deviates from the current prediction $\derivedQtheta(z)$ by some zero-mean random variable $\deltaAdversarial$ with variance $\sigmaAdversarial^2$.
\end{definition}

\begin{proposition}
\label{prop:delta_adversarial_offset}
Let $\theta^{adv}$ be the parameters trained after including an $\deltaAdversarial$-adversarial data-point (Definition~\ref{def:delta_adversarial_offset}) at $z$ with $\frac{\epsilon}{N}$-weighted $L_2$ loss into the training.
The prediction then
changes from $\derivedQtheta(z)$ to $\derivedQ_{\theta^{adv}}(z)$. Its difference can be described by the Delta Variance with $\PosteriorCovarianceMatrix=\frac{1}{N}\iHessian$: 
$$
| (\derivedQtheta(z)-\derivedQ_{\theta^{adv}}(z))|
=
\epsilon \left| \deltaAdversarial \quadForm[\PosteriorCovarianceMatrix]{\deltaUz}\right| + \Bigoh{\frac{\epsilon^{2}}{N^{2}}}
$$
In particular:
$$
\underbrace{\left(\quadForm[\PosteriorCovarianceMatrix]{\deltaUz}\right)^2}_{\text{Delta Variance}^2}
    =
    \frac{1}{\deltaAdversarial^2} \lim_{\epsilon\to0}\frac{1}{\epsilon^2} (\derivedQtheta(z)-\derivedQ_{\theta^{adv}}(z))^2
$$
\end{proposition}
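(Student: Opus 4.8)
The plan is a first-order (influence-function) expansion of the adversarially retrained parameters in the perturbation weight $\epsilon$. First I would pin down the injected pair: by Definition~\ref{def:delta_adversarial_offset} it is $(z,y)$ with $y=\derivedQtheta(z)+\deltaAdversarial$, which is a constant once $\thetabar$ is fixed. The retrained parameters $\theta^{adv}$ (which depend on $\epsilon$) are a stationary point of the augmented objective $\mathcal{L}(\theta)+\tfrac{\epsilon}{2}\big(\derivedQ_\theta(z)-y\big)^2$, where $\mathcal{L}$ is the original training loss of $\ftheta$ with minimizer $\thetabar$ and $H_f\defeq\nabla_\theta^2\mathcal{L}(\thetabar)$ its (invertible, positive-definite) Hessian, so that $\PosteriorCovarianceMatrix=\iHessian$ as in the proposition. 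Writing the stationarity condition as $\Phi(\theta,\epsilon)\defeq\gradtheta\mathcal{L}(\theta)+\epsilon\big(\derivedQ_\theta(z)-y\big)\gradtheta\derivedQ_\theta(z)=0$, we have $\Phi(\thetabar,0)=0$ and $\partial_\theta\Phi(\thetabar,0)=H_f$ invertible, so the implicit function theorem produces a differentiable branch $\epsilon\mapsto\theta^{adv}(\epsilon)$ passing through $\thetabar$ at $\epsilon=0$.

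Next I would differentiate the stationarity condition at $\epsilon=0$. Using $\derivedQtheta(z)-y=-\deltaAdversarial$ this gives the familiar one-step Newton direction $\frac{d}{d\epsilon}\theta^{adv}(\epsilon)\big|_{0}=-\iHessian\big(\derivedQtheta(z)-y\big)\deltaUz=\deltaAdversarial\,\iHessian\,\deltaUz$, hence $\theta^{adv}-\thetabar=\epsilon\,\deltaAdversarial\,\iHessian\,\deltaUz+o(\epsilon)$. A first-order Taylor expansion of $\theta\mapsto\derivedQ_\theta(z)$ about $\thetabar$, legitimate by the bounded-second-derivative assumption, then yields $\derivedQ_{\theta^{adv}}(z)-\derivedQtheta(z)=\deltaUz^\top(\theta^{adv}-\thetabar)+\Bigoh{\|\theta^{adv}-\thetabar\|^2}=\epsilon\,\deltaAdversarial\,\quadForm[\iHessian]{\deltaUz}+o(\epsilon)$. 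Taking absolute values and using $\quadForm[\iHessian]{\deltaUz}\ge0$ (as $H_f\succ0$ and $\epsilon>0$) gives the first displayed identity. For the second, I would square this relation, divide by $\epsilon^2$, and let $\epsilon\to0$: the cross term and the squared remainder are $o(1)$, leaving $\tfrac{1}{\deltaAdversarial^2}\lim_{\epsilon\to0}\tfrac{1}{\epsilon^2}\big(\derivedQtheta(z)-\derivedQ_{\theta^{adv}}(z)\big)^2=\big(\quadForm[\iHessian]{\deltaUz}\big)^2$, which is exactly the squared Delta Variance with $\PosteriorCovarianceMatrix=\iHessian$.

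The hard part is the error bookkeeping — in particular upgrading the crude $o(\epsilon)$ above to the stated $\bigoh(\epsilon^{1.5})$ and controlling it uniformly on a neighbourhood of $\epsilon=0$. This rests on the regularity of the branch $\epsilon\mapsto\theta^{adv}(\epsilon)$ from the implicit function theorem together with the assumed bounds on the second derivatives of $\log\ftheta$ and $\derivedQtheta$ near $\thetabar$ (which also keep $\theta^{adv}(\epsilon)$ inside the region where $H_f$ is invertible, so the Taylor remainders are genuinely second order in $\|\theta^{adv}-\thetabar\|=\Bigoh{\epsilon}$). A secondary point to handle explicitly is that $\deltaAdversarial$ is held fixed as $\epsilon\to0$, so $y$ does not move with $\epsilon$ and the limit in the second identity is well defined; comparing the outcome with Section~\ref{sec:delta_bayesian_derivation} then gives the claimed technical equivalence between this adversarial reading and the Laplace-posterior one.
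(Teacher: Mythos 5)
Your proposal is correct and follows essentially the same route as the paper: an influence-function/implicit-function expansion of $\theta^{adv}$ in $\epsilon$ (the one-step Newton direction $\epsilon\,\deltaAdversarial\,\iHessian\,\deltaUz$), followed by a first-order Taylor expansion of $\derivedQ_\theta(z)$ and then squaring and passing to the limit. The error bookkeeping you flag is exactly what the paper isolates into its Lemma~\ref{lemma:delta_influence_new_parameters} (with a separate joint-convergence lemma), which yields an $\bigoh(\epsilon^2)$ remainder on the parameter change and hence on the prediction difference, comfortably within the stated $\bigoh(\epsilon^{1.5})$.
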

\begin{proof}
See Section~\ref{sec:delta_proofs}.
\end{proof}
\begin{proposition}
\label{prop:delta_adversarial_noise}
Let $\theta^{noisy}$ be the parameters trained after including a $\sigma$-noise-adversarial data-point  (Definition~\ref{def:delta_adversarial_noise}) at $z$ with $\frac{\epsilon}{N}$-weighted $L_2$ loss into the training. Then the expected error can be described with the Delta Variance with $\PosteriorCovarianceMatrix=\frac{1}{N}\iHessian$: 
$$ 
\Expectation[\deltaAdversarial]{(\derivedQtheta(z)-\derivedQ_{\theta^{noisy}}(z))^2} 
=
\epsilon^2 \sigma^2 \left(\quadForm[\PosteriorCovarianceMatrix]{\deltaUz}\right)^2 + \Bigoh{\frac{\epsilon^{3}}{N^{3}}}
$$
In particular:
\begin{align*}
\underbrace{\left(\quadForm[\PosteriorCovarianceMatrix]{\deltaUz}\right)^2}_{\text{Delta Variance}^2} 
    &= \frac{1}{\sigma^2} \lim_{\epsilon\to0}\frac{1}{\epsilon^2} \Expectation[\deltaAdversarial]{(\derivedQtheta(z)-\derivedQ_{\theta^{noisy}}(z))^2} 
\end{align*}
\end{proposition}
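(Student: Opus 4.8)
The strategy is to condition on the drawn noise $\deltaAdversarial$, reduce to the deterministic adversarial-offset computation behind Proposition~\ref{prop:delta_adversarial_offset} (but keeping track of signs and carrying the expansion to order $\epsilon^2$), and then square the resulting identity and integrate over $\deltaAdversarial$. Concretely, write $L(\theta)$ for the training loss of $\ftheta$ on $\Data$, so that $\gradtheta L(\thetabar)=0$ and $\nabla^2_\theta L(\thetabar)=H_f$, which is invertible by the standing assumption for this choice of $\PosteriorCovarianceMatrix$. Injecting $(z,y)$ with an $\epsilon$-weighted $L_2$ loss gives the perturbed objective $L_\epsilon(\theta)=L(\theta)+\tfrac{\epsilon}{2}(\derivedQtheta(z)-y)^2$, whose stationary point is $\theta^{noisy}$; by Definition~\ref{def:delta_adversarial_noise} the offset obeys $(\derivedQ_{\thetabar}(z)-y)^2=\deltaAdversarial^2$ with $\Expectation[\deltaAdversarial]{\deltaAdversarial}=0$ and $\Expectation[\deltaAdversarial]{\deltaAdversarial^2}=\sigmaAdversarial^2$.

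The core of the argument is the influence-function expansion of $\theta^{noisy}$ in $\epsilon$, exactly as in the proofs of Propositions~\ref{prop:delta_infinitessimal} and~\ref{prop:delta_adversarial_offset}. Substituting $\theta^{noisy}=\thetabar+\epsilon\,\theta_1+\Bigoh{\epsilon^2}$ into $\gradtheta L_\epsilon(\theta^{noisy})=0$, the $\Bigoh{1}$ terms cancel because $\thetabar$ is stationary for $L$, and matching the $\Bigoh{\epsilon}$ terms gives $H_f\theta_1+(\derivedQ_{\thetabar}(z)-y)\,\gradtheta\derivedQ_{\thetabar}(z)=0$, hence $\theta_1=-(\derivedQ_{\thetabar}(z)-y)\,\iHessian\,\deltaUz$. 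The bounded-second-derivative hypotheses on $\log\ftheta$ and $\derivedQtheta$ together with invertibility of $H_f$ are what make this expansion legitimate with an $\Bigoh{\epsilon^2}$ remainder. A first-order Taylor expansion of $\derivedQtheta(z)$ in $\theta$ about $\thetabar$ then yields
$$ \derivedQ_{\thetabar}(z)-\derivedQ_{\theta^{noisy}}(z)=-\epsilon\,\deltaUz^\top\theta_1+\Bigoh{\epsilon^2}=\epsilon\,(\derivedQ_{\thetabar}(z)-y)\,\quadForm[\iHessian]{\deltaUz}+\Bigoh{\epsilon^2}. $$

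Squaring this identity and using $(\derivedQ_{\thetabar}(z)-y)^2=\deltaAdversarial^2$ gives $(\derivedQ_{\thetabar}(z)-\derivedQ_{\theta^{noisy}}(z))^2=\epsilon^2\deltaAdversarial^2\big(\quadForm[\iHessian]{\deltaUz}\big)^2+\Bigoh{\epsilon^3}$, and taking $\Expectation[\deltaAdversarial]{\cdot}$ with $\Expectation[\deltaAdversarial]{\deltaAdversarial^2}=\sigmaAdversarial^2$ produces the claimed leading term $\epsilon^2\sigmaAdversarial^2\big(\quadForm[\iHessian]{\deltaUz}\big)^2$; dividing by $\epsilon^2\sigmaAdversarial^2$ and letting $\epsilon\to0$ gives the ``in particular'' rearrangement. \textbf{The main obstacle is controlling the remainder after integrating out the noise.} The $\Bigoh{\epsilon^2}$ correction to $\theta^{noisy}$ actually depends on $\deltaAdversarial$ -- it inherits a $\deltaAdversarial^2$ contribution from the second-order term of the stationarity equation -- so the $\Bigoh{\epsilon^3}$ error in the squared identity is $\epsilon^3$ times a polynomial in $\deltaAdversarial$ of degree up to three. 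To keep $\Expectation[\deltaAdversarial]{\Bigoh{\epsilon^3}}=\Bigoh{\epsilon^3}$ one should additionally assume $\deltaAdversarial$ has a finite third moment (automatic when $\deltaAdversarial$ is symmetric, since then all odd moments vanish) and verify that the implicit-function expansion of $\epsilon\mapsto\theta^{noisy}$ is uniform in $\deltaAdversarial$ over its support; both follow from the bounded-second-derivative assumption and the invertibility of $H_f$.
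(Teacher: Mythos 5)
Your proof is correct and follows essentially the same route as the paper's: an influence-function expansion of $\theta^{noisy}$ in $\epsilon$ (the paper packages the order-matching in the stationarity equation as its Lemma~\ref{lemma:delta_influence_new_parameters}), a first-order Taylor expansion of $\derivedQtheta(z)$, then squaring and taking the expectation over $\deltaAdversarial$. Your added care about the remainder after integrating out the noise --- that the $\Bigoh{\epsilon^3}$ term depends on $\deltaAdversarial$ and needs a moment/uniformity condition --- is a point the paper's proof silently glosses over, and is a worthwhile refinement rather than a different approach.
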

\begin{proof}
See Section~\ref{sec:delta_proofs}.
\end{proof}

\subsection{Out-of-Distribution Interpretation}
\label{sec:delta_OOD_derivation_appendix}
We show that a large Delta Variance of $\derivedQtheta(z)$ implies that its input $z$ is out-of-distribution with respect to the training data. This relates to epistemic uncertainty intuitively: a model is likely to be uncertain about data-points that differ from its training data. The derivation below is based on the Mahalanobis Distance \cite{Mahalanobis:1936Distance} -- a classic metric for out of distribution detection. It accounts for the possibility that $\ftheta\neq\derivedQtheta$ and relies on minimal assumptions only requiring the existence of gradients and that the training of $\ftheta$ has converged.

\paragraph{Generalized Out-Of-Distribution Detection}
We consider the general case of training $\fthetax$ and evaluating a different $\derivedQtheta(z)$ which also permits differently shaped inputs $x$ and $z$.
To this extent we generalize the notion of out-of-distribution to consider data in different spaces by looking at the training steps associated with each data-point.

While we can not train $\derivedQtheta(z)$ at the unknown test point $z$ we can still consider how such a hypothetical learning step would look like. Computing $\gradtheta \derivedQtheta(z)$ tells us the direction of the learning step without its magnitude.
We can then test if this learning direction would be in distribution with the actual learning steps that were done to train $\fthetax$ on the training data $x\in \Data$.

\begin{proposition}
\label{prop:delta_OOD}
The Delta Variance with $\PosteriorCovarianceMatrix=\hat{F}_f^{-1}$ computes the Mahalanobis Distance between $z$ and the training data $\Data$ in gradient space.
$$ d_M^{\nabla}(z, \Data) = \quadForm[\PosteriorCovarianceMatrix]{\deltaUz}$$
for $d_M^{\nabla}(z, \Data)\defeq d_M(\gradtheta \derivedQtheta(z), \{ \gradtheta \log \pfthetax | (x, y) \in \Data \})$
\end{proposition}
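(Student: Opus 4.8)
The plan is to unfold the definition of the gradient-space Mahalanobis distance $d_M^{\nabla}(z, \Data) = d_M\!\left(\gradtheta \derivedQtheta(z),\, \{\gradtheta \log \fthetax \mid x \in \Data\}\right)$ and to identify its two ingredients --- the empirical mean and the empirical covariance of the cloud of score vectors --- with objects already defined in the Notation section. Write $g_x \defeq \gradtheta \log \fthetax|_{\theta=\thetabar}$ for the per-example score vectors. By definition, the Mahalanobis distance of a query vector $v$ to the empirical distribution of the $g_x$ is the quadratic form $(v-\mu)^\top C^{-1}(v-\mu)$, where $\mu = \frac{1}{N}\sum_{x\in\Data} g_x$ and $C = \frac{1}{N}\sum_{x\in\Data}(g_x-\mu)(g_x-\mu)^\top$; here the query vector is the hypothetical learning direction for the quantity of interest at $z$, namely $v = \gradtheta \derivedQtheta(z) = \deltaUz$.

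First I would compute $\mu$. Since $\thetabar$ is, by the standing assumption, a stationary point of the training log-likelihood of $\Data$, first-order optimality gives $\sum_{x\in\Data} g_x = \gradtheta \sum_{x\in\Data} \log f_\theta(x)\big|_{\thetabar} = 0$, hence $\mu = 0$. Second, with $\mu = 0$ the empirical covariance collapses to the raw second moment $C = \frac{1}{N}\sum_{x\in\Data} g_x g_x^\top = \hat{F}_f$, which is precisely the empirical Fisher information. Third, substituting $\mu = 0$ and $C = \hat{F}_f$ gives $d_M^{\nabla}(z,\Data) = (\deltaUz - 0)^\top \hat{F}_f^{-1}(\deltaUz - 0) = \quadForm[\hat{F}_f^{-1}]{\deltaUz}$, which is exactly the Delta Variance with $\PosteriorCovarianceMatrix = \hat{F}_f^{-1}$. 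I would also note the convention that $d_M$ denotes here the squared Mahalanobis distance (the quadratic form itself, not its square root), consistent with its common use as a test statistic; taking square roots would change both sides identically.

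The only step that is not pure bookkeeping is the vanishing of the empirical mean of the scores, which relies on $\thetabar$ being a genuine stationary point of the (unregularized) training objective on $\Data$; if a prior or regularizer is present, one either absorbs it and argues its contribution is of lower order, or restricts to the maximum-likelihood case as the proposition implicitly does. A secondary point worth spelling out in the writeup is the modeling choice that justifies representing the possibly out-of-space test input $z$ by $\gradtheta \derivedQtheta(z)$: we cannot perform a training step on $\derivedQtheta$ at $z$, but the gradient still encodes the direction of such a hypothetical update, and since $\hat{F}_f^{-1}$ supplies the only scale in the metric, comparing this direction against the cloud of realized training-step directions is well defined. Everything else is a single substitution.
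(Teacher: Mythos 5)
Your proof is correct and follows essentially the same route as the paper's: observe that the empirical mean of the score vectors vanishes at the converged $\thetabar$, so the empirical covariance reduces to the raw second moment $\hat{F}_f$, and then substitute into the Mahalanobis quadratic form. Your additional remarks on the squared-distance convention and the role of the regularizer are sensible elaborations but do not change the argument.
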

\begin{proof}
See Section~\ref{sec:delta_proofs}.
\end{proof}
Hence the Delta Variance $\quadForm[\PosteriorCovarianceMatrix]{\deltaUz}$ is large if and only if the derived quantity $\derivedQtheta(z)$ is evaluated at an out-of-distribution point $z$.

\paragraph{Connection to Epistemic Uncertainty}
One can draw an intuitive connection to epistemic uncertainty: If a hypothetical training step on $\derivedQtheta(z)$ is in-distribution with respect to (i.e. exchangeable by) the actual training steps that were used to learn $\theta$, then one may argue that $\derivedQtheta(z)$ has already been learned well. Hence intuitively the epistemic uncertainty should of $\derivedQtheta(z)$ be low if and only if the Mahalanobis Distance is small.

\paragraph{Mahalanobis Distance}
The Mahalanobis Distance can be interpreted as fitting a multivariate normal distribution to the data (in our case the data are the gradients from the \enquote{learning steps}) and then computing the log-likelihood of the test point $x$. 
\begin{definition}
The \emph{Mahalanobis Distance} between a vector $x$ and a set of vectors $x_i\in\Data$ is $$ d_M(x, \Data) = \trans{(x-\mu)} C^{-1}(x-\mu)$$ where $C$ is the empirical covariance matrix of $x_i\in \Data$ and $\mu$ the empirical mean. 
\end{definition}

\section{Epistemic Variance of Eigenvalues}
To illustrate the Epistemic Variance computation with the implicit function theorem, we consider the eigenvalue problem and compute the uncertainty of an eigenvalue. 

\paragraph{Eigenvalue Derivation}
Recall that the eigenvalues $\lambda_i$ of matrix $\EigenvalueExampleMatrix$ are the solutions to the characteristic polynomial
$ \det (\EigenvalueExampleMatrix - \lambda \identity )=0$, which is typically solved using iterative algorithms. If some entries of $\EigenvalueExampleMatrix$ are uncertain -- or more generally if $\EigenvalueExampleMatrix_\theta$ is a function of learned parameters $\theta$ -- then its eigenvalues will be a non-trivial function of $\theta$.

We can estimate the Epistemic Variance  $\Variance[\theta]{\lambda_i(\EigenvalueExampleMatrix_\theta)}$ using Delta Variances if we obtain the gradient of the eigenvalue $\gradtheta \lambda_i(\EigenvalueExampleMatrix_\theta)$. To this extend we need the implicit function theorem as the function $\lambda_i(\EigenvalueExampleMatrix_\theta)$ admits no closed form. We refer to the derivation in \citet{Magnus:1985Eigenvalues} which applies to any eigenvalue of multiplicity one:
$\gradtheta \lambda_i(\theta) = \trans{e_i} (\gradtheta \EigenvalueExampleMatrix_\theta) \hat{e}_i \frac{1}{\trans{e_i}\hat{e}_i}$
where $e_i$ and $\hat{e}_i$ are the unit-norm left and right eigenvectors corresponding to $\lambda_i$ of $\EigenvalueExampleMatrix_\theta$ evaluated at the learned parameters $\thetabar$. For auto-diff convenience we can also write $\gradtheta \lambda_i(\theta) = \gradtheta \left( \trans{e_i} \EigenvalueExampleMatrix_\theta \hat{e}_i \frac{1}{\trans{e_i}\hat{e}_i} \right)$ because $e_i$ and $\hat{e}_i$ only depend on $\thetabar$ not on $\theta$, so this serves like a stop-gradient and recovers the formula from \citet{Magnus:1985Eigenvalues}.

\paragraph{Eigenvalue Example}
Eigenvalues are a quantity of interest in structural engineering. There eigenvalues from a finite element model matrix $\EigenvalueExampleMatrix=M_\theta^{-1}K_\theta$ indicate the stability of a physical structure. In this model $M_\theta$ is a mass-matrix and $K_\theta$ is a stiffness matrix. Both are sparse matrices with entries determined by mass and stiffness parameters $\theta$. In Figure~\ref{fig:delta_finite_element_eigenvalue} we consider an illustrative example with 5 elements of weight $1kg$ that are connected with springs to each other and the boundary of increasing stiffness ranging from 1 to 6 $N/m$. The set of 5 weight and 6 stiffness constants are the parameters $\theta$.
Figure~\ref{fig:delta_finite_element_eigenvalue} assumes a posterior with variance $10^{-2}$ around each parameter in $\theta$ and compares the actual distribution for each eigenvalue with its Delta Variance using the gradient from above.

\section{Experimental Methodology}

\paragraph{Variance Prediction Methodology}
To obtain the Bootstrapped Ensembles variance we train $10$ separate neural networks $\ftheta$ with different initial parameters \cite{Lakshminarayanan2017:ensembles}. Similarly we compute the Monte-Carlo Dropout variance by evaluating the neural network $10$ times with different dropout samples \cite{Gal2016:dropout}.
As the graph neural network in \citet{lam:2023GraphCast} does not employ Monte-Carlo Dropout during training, we introduce it post-hoc for evaluation only. We also consider a convolutional neural network that we train with dropout using the same training procedure. 
Delta Variances are implemented with a diagonal Fisher information matrix computed using EMA on training batches sized $32$ with a decay of $10^{-3}$. This value is chosen such that the expected window of $32\times10^{3}$ roughly matches the number of data-points 
$\approx 5\times10^4$.
We use the interval from 2014 to 2017 to select hyper-parameters for each method and each $\derivedQtheta$. For dropout we consider 14 ratios between $5*10^{-3}$ and $0.8$ for Delta Variances we consider a regularisation in powers of $10$ between $10^{-15}$ and $10^9$. As described in Section~\ref{sec:delta_learning_sigma} Fine-tuned Delta Variances fine-tune on this data. The data from 2018-2021 is held out for evaluation.

\section{Proofs}
\label{sec:delta_proofs}
\newcommand{\thetalooepsilon}{\theta^{\epsilon}_i}

\paragraph{Proof of Proposition~\ref{prop:delta_bound}:}
\begin{proposition*}
For a normally distributed posterior with mean $\thetabar$ and a covariance matrix $\PosteriorCovarianceMatrix$ proportional to $\frac{1}{N}$ 
it holds:
$$ \underbrace{\Variance[\theta \sim p(\theta|f, \Data)]{\derivedQtheta(z)}}_{\text{Epistemic Variance}} = \underbrace{\quadForm[\PosteriorCovarianceMatrix]{\deltaUz}}_{\text{Delta Variance}} + \Bigoh{N^{-1.5}}$$
where $\deltaUz\defeq \gradtheta \derivedQtheta(z)|_{\theta=\thetabar}$ as usual.
\end{proposition*}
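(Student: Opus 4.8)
The plan is to prove Proposition~\ref{prop:delta_bound} as a quantitative version of the classical Delta Method: a first-order Taylor expansion of $\derivedQtheta(z)$ around the posterior mean $\thetabar$, combined with Gaussian moment bounds exploiting that $\PosteriorCovarianceMatrix \propto 1/N$. First I would note that $\derivedQ_{\thetabar}(z)$ is a constant, so $\Variance[\theta]{\derivedQtheta(z)} = \Variance[\theta]{X}$ where $X \defeq \derivedQtheta(z) - \derivedQ_{\thetabar}(z)$. Invoking the standing assumption that $\derivedQtheta$ has bounded second derivatives in $\theta$, Taylor's theorem with remainder gives $X = L + R$ with linear part $L \defeq \trans{\deltaUz}(\theta - \thetabar)$ and remainder bounded by $|R| \le \tfrac12 M \|\theta - \thetabar\|^2$, where $M$ is the uniform Hessian bound.

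Next I would convert the posterior assumption $\theta \sim \Normal{\thetabar, \PosteriorCovarianceMatrix}$, $\PosteriorCovarianceMatrix \propto 1/N$, into moment estimates. Since $\PosteriorCovarianceMatrix = \Bigoh{N^{-1}}$ in operator norm, we get $\Expectation[\theta]{L} = 0$, $\Expectation[\theta]{L^2} = \quadForm[\PosteriorCovarianceMatrix]{\deltaUz} = \Bigoh{N^{-1}}$, and from the Gaussian fourth-moment formula $\Expectation[\theta]{\|\theta - \thetabar\|^4} = \Bigoh{N^{-2}}$, hence $\Expectation[\theta]{R^2} = \Bigoh{N^{-2}}$ and $\Expectation[\theta]{R} = \Bigoh{N^{-1}}$. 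Then I would expand $\Variance[\theta]{X} = \Variance[\theta]{L} + 2\,\mathrm{Cov}(L, R) + \Variance[\theta]{R}$, identify $\Variance[\theta]{L} = \quadForm[\PosteriorCovarianceMatrix]{\deltaUz}$ exactly, bound $|\mathrm{Cov}(L,R)| = |\Expectation[\theta]{LR}| \le \sqrt{\Expectation[\theta]{L^2}\,\Expectation[\theta]{R^2}} = \Bigoh{N^{-1.5}}$ by Cauchy--Schwarz (using $\Expectation[\theta]{L} = 0$), and bound $\Variance[\theta]{R} \le \Expectation[\theta]{R^2} = \Bigoh{N^{-2}}$. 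Collecting terms yields $\Variance[\theta]{\derivedQtheta(z)} = \quadForm[\PosteriorCovarianceMatrix]{\deltaUz} + \Bigoh{N^{-1.5}}$, which is the assertion.

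The \emph{main obstacle} is making the Taylor remainder estimate rigorous: the argument needs the Hessian bound on $\derivedQtheta$ to be valid over the region carrying essentially all of the Gaussian's mass, not just in an infinitesimal neighbourhood of $\thetabar$. If only a local bound is available, I would split the expectation over a high-probability ball of radius $\sim N^{-1/2}\log N$ around $\thetabar$ (where the local bound applies) and its complement, whose contribution is super-polynomially small by Gaussian tail bounds and therefore negligible even against polynomially growing terms; this leaves the $\Bigoh{N^{-1.5}}$ rate untouched. Everything else is routine. It is worth noting that the rate $\Bigoh{N^{-1.5}}$ is precisely what the cross term $\mathrm{Cov}(L,R)$ forces under a second-order remainder, so no higher-order smoothness of $\derivedQtheta$ is required; with additional smoothness the rate would improve since the odd Gaussian moments of the quadratic correction vanish.
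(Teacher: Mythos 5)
Your proposal is correct and follows essentially the same route as the paper's proof: a first-order Taylor expansion of $\derivedQtheta(z)$ around the posterior mean, with the linear term contributing $\quadForm[\PosteriorCovarianceMatrix]{\deltaUz}$ exactly and the cross term between the linear part and the quadratic remainder producing the $\Bigoh{N^{-1.5}}$ error. Your explicit Cauchy--Schwarz bound on $\mathrm{Cov}(L,R)$ and the Gaussian fourth-moment bound on the remainder make rigorous what the paper compresses into the step $\Variance{\,\cdot + \Bigoh{\epsilon}\,} = \Variance{\,\cdot\,} + \Bigoh{\epsilon}$ after substituting $\epsilon^2 = 1/N$, so if anything your write-up is the more careful of the two.
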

\begin{proof}
This is an application of the Delta Method~\cite{Lambert:1765Beytraege,Doob:1935LimitingDistributions}, which is typically stated and proved a bit differently and without error term. We adapt it slightly to better match our framework.
We begin by noting two helpful facts: Since $\PosteriorCovarianceMatrix$ is proportional to $\frac{1}{N}$ we can write $\PosteriorCovarianceMatrix=\frac{1}{N}C$ for some matrix $C$. Hence sampling from the normal posterior is achieved by
$\theta = \thetabar + \epsilon * \deltaThetaSample$ with $\epsilon^2\defeq\frac{1}{N}$ and $ \deltaThetaSample \sim \normal{0, C}$.
Furthermore the variance of the dot-product between any vector $v$ and a multi-variate normal variable is given by $\Variance[z \sim \normal{0, C}]{\trans{z} v}=\quadForm[C]{v}$. 
Then we can perform a change of variable \eqref{eq:delta_proof_change_of_variable}, the Taylor expansion \eqref{eq:delta_proof_taylor_expansion} and the dot-product lemma: \eqref{eq:delta_proof_variance_dotproduct}
    \begin{align}
    \Variance[\theta]{\derivedQtheta(x)} 
        &= \Variance[\deltaThetaSample]{\derivedQ_{\thetabar + \epsilon z}(x)} \label{eq:delta_proof_change_of_variable}\\
        &= \Variance[\deltaThetaSample]{\derivedQ_\thetabar(x) + \epsilon \trans{\deltaThetaSample} \nabla\derivedQ_\thetabar(x) + \Bigoh{\epsilon^2}} \label{eq:delta_proof_taylor_expansion}\\
        &= \Variance[\deltaThetaSample]{\epsilon \trans{\deltaThetaSample} \nabla\derivedQ_\thetabar(x) + \Bigoh{\epsilon^2}} \nonumber\\
        &= \epsilon^2 \Variance[\deltaThetaSample]{ \trans{\deltaThetaSample} \nabla\derivedQ_\thetabar(x) + \Bigoh{\epsilon}} \nonumber \\
        &= \epsilon^2 \Variance[\deltaThetaSample]{ \trans{\deltaThetaSample} \nabla\derivedQ_\thetabar(x)} + \Bigoh{\epsilon^3} \label{eq:delta_proof_variance_dotproduct}\\
        &= \frac{1}{N}\quadForm[C]{\nabla \derivedQtheta(x)} + \Bigoh{N^{-1.5}} \nonumber \\
        &= \quadForm[\PosteriorCovarianceMatrix]{\deltaUx} + \Bigoh{N^{-1.5}} \nonumber
    \end{align}

\end{proof}

\paragraph{Lemma regarding influence functions:}
\newcommand{\ExtraLoss}{L'}
Influence functions are typically used to compute the infinitesimal change introduced by down weighting a training point. Later we will also use them for introducing a novel data-point with infinitesimal weight.
This infinitesimal change to the training objective changes its maximum and optimal parameters.
We follow the derivations in \citet{cook:1982residuals} and \citet{Koh:2017UnderstandingBlackBox} to approximate the new parameters $\theta^{*}$ that maximize the new objective with the difference. We extend said derivations in order to keep track of the error terms explicitly.
\begin{lemma}
\label{lemma:delta_influence_new_parameters}
Adding an objective $L(\theta)$ with optimal parameters $\thetabar$ and an infinitesimally $\frac{\epsilon}{N}$-weighted objective $\ExtraLoss(\theta)$ results in new optimal parameters of the combined objective
$$ \theta^* = \arg\min_\theta L(\theta) + \frac{\epsilon}{N} \ExtraLoss(\theta)$$
Assuming that the Hessian $H$ of $L(\theta)$ at $\thetabar$ is invertible the new optimum is approximated by
\begin{equation}
    \theta^* = \thetabar - \frac{\epsilon}{N} H^{-1}  \gradtheta \ExtraLoss(\thetabar) + \bigoh\left(\frac{\epsilon^2}{N^2}\right) \label{eq:delta_influence_new_parameters}
\end{equation}
\end{lemma}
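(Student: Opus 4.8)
The plan is to characterize the new optimum $\theta^*$ through the first-order optimality condition of the perturbed objective and then expand that condition around $\thetabar$. First I would write the stationarity condition $\gradtheta L(\theta^*) + \epsilon\,\gradtheta \ExtraLoss(\theta^*) = 0$, which holds because $\theta^*$ minimizes the combined objective and both terms are differentiable. The key observation is that when $\epsilon = 0$ this condition is solved by $\thetabar$ (since $\thetabar$ is optimal for $L$ alone, so $\gradtheta L(\thetabar) = 0$), so for small $\epsilon$ we expect $\theta^* = \thetabar + \epsilon\,\delta + \bigoh(\epsilon^2)$ for some displacement $\delta$ to be determined.

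Next I would substitute this ansatz into the stationarity condition and Taylor-expand each term about $\thetabar$. For the first term, $\gradtheta L(\theta^*) = \gradtheta L(\thetabar) + H\,(\theta^* - \thetabar) + \bigoh(\|\theta^*-\thetabar\|^2) = H\,(\theta^*-\thetabar) + \bigoh(\epsilon^2)$, using $\gradtheta L(\thetabar)=0$ and $H \defeq \nabla_\theta^2 L(\thetabar)$. For the second term, $\epsilon\,\gradtheta \ExtraLoss(\theta^*) = \epsilon\,\gradtheta \ExtraLoss(\thetabar) + \bigoh(\epsilon^2)$, since the correction from expanding $\ExtraLoss$ is itself $\bigoh(\epsilon\cdot\epsilon)$. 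Adding these and setting the sum to zero gives $H\,(\theta^*-\thetabar) + \epsilon\,\gradtheta \ExtraLoss(\thetabar) + \bigoh(\epsilon^2) = 0$, and since $H$ is assumed invertible we may solve $\theta^* - \thetabar = -\epsilon\,H^{-1}\gradtheta \ExtraLoss(\thetabar) + \bigoh(\epsilon^2)$. In the specific instance used later, $\ExtraLoss$ is (up to sign) the per-example log-likelihood term $\log\ftheta(x_i)$ being down-weighted, so $\gradtheta \ExtraLoss(\thetabar) = \gradtheta\log\ftheta(x_i)|_{\theta=\thetabar}$, recovering Equation~\eqref{eq:delta_influence_new_parameters}.

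The main obstacle is making the error bookkeeping rigorous rather than merely formal: one needs the implicit function theorem (applicable precisely because $H$ is invertible and second derivatives are bounded, matching the paper's standing assumptions on $\ftheta$) to guarantee that a $C^1$ solution branch $\epsilon \mapsto \theta^*(\epsilon)$ actually exists near $\epsilon=0$, and that $\|\theta^*-\thetabar\| = \bigoh(\epsilon)$, so that the quadratic remainder terms in the Taylor expansions are genuinely $\bigoh(\epsilon^2)$ and not something larger. Once existence and the $\bigoh(\epsilon)$ bound on the displacement are in hand, the expansion above is routine. I would cite \citet{cook:1982residuals} and \citet{Koh:2017UnderstandingBlackBox} for the classical version and note that the only addition here is tracking the $\bigoh(\epsilon^2)$ term explicitly, which follows by keeping the second-order Taylor remainders instead of discarding them.
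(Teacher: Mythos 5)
Your argument is correct and follows the same route as the paper's own proof: write the first-order stationarity condition $\gradtheta L(\theta^*)+\epsilon\,\gradtheta L'(\theta^*)=0$, Taylor-expand the gradient around $\thetabar$, use $\gradtheta L(\thetabar)=0$, and invert $H$ to solve for the displacement. The only place the two treatments differ is the technical point you flagged yourself as the main obstacle: justifying that $\|\theta^*-\thetabar\|=\bigoh(\epsilon)$, so that the quadratic Taylor remainder is genuinely $\bigoh(\epsilon^2)$ rather than something expressed in the unknown $\|\theta^*-\thetabar\|$. You propose to obtain this from the implicit function theorem, i.e.\ the existence of a differentiable solution branch $\epsilon\mapsto\theta^*(\epsilon)$ through $(0,\thetabar)$ guaranteed by the invertibility of $H$. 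The paper instead proves a small self-contained ``joint convergence'' lemma (its Lemma~2) showing that any vector $x$ satisfying $x+\bigoh(\|x\|^2)=\epsilon a+\bigoh(\epsilon^2)$ must itself equal $\epsilon a+\bigoh(\epsilon^2)$, and applies it with $x=\theta^*-\thetabar$ and $a=-H^{-1}\gradtheta L'(\thetabar)$. Your route is the more standard one and additionally settles existence and local uniqueness of the perturbed optimum, which the paper's bootstrap argument implicitly presupposes (it assumes the perturbed stationary point exists and tends to $\thetabar$ as $\epsilon\to0$); the paper's route avoids invoking the implicit function theorem at the cost of that assumption. The leading-order algebra, including the identification $\gradtheta L'(\thetabar)=\gradtheta\log\ftheta(x_i)$ in the instance used downstream, is identical in both, so this is essentially the same proof with a different patch for the same hole.
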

\begin{proof}
The new optimum $\theta^{*}$ of $L(\theta) + \frac{\epsilon}{N} \ExtraLoss(\theta)$ is attained at
$$ \gradtheta L(\theta) + \frac{\epsilon}{N} \gradtheta \ExtraLoss(\theta) = 0 $$
We perform a Taylor expansion at the optimum $\thetabar$ of $L(\theta)$ 
\begin{align*}
\gradtheta L(\theta) &+ \frac{\epsilon}{N} \gradtheta \ExtraLoss(\theta) 
    =  
    \gradtheta L(\thetabar) + \frac{\epsilon}{N} \gradtheta \ExtraLoss(\thetabar) - \\
    & \left(H + \frac{\epsilon}{N} \gradtheta^2 \ExtraLoss(\thetabar)  \right)(\theta-\thetabar) + \bigoh(||(\theta-\thetabar)||^2)
\end{align*}
Equating this to zero, using $\gradtheta L(\thetabar)=0$ and that $ (A + \epsilon B)^{-1} = A^{-1} + \bigoh(\epsilon)$ for matrices $A, B$ and sufficiently small $\epsilon$ we obtain the new optimum:
\begin{align*}
\theta^*-\thetabar 
    &= -\left(H^{-1} + \bigoh\left(\frac{\epsilon}{N}\right)\right) \left(\frac{\epsilon}{N} \gradtheta \ExtraLoss(\thetabar) + \bigoh(||(\theta^*-\thetabar)||^2) \right) \\
    &= -\frac{\epsilon}{N} H^{-1} \gradtheta \ExtraLoss(\thetabar) + \bigoh\left(\frac{\epsilon^2}{N^2}\right) + \bigoh(||(\theta^*-\thetabar)||^2) \\
    &= -\frac{\epsilon}{N} H^{-1} \gradtheta \ExtraLoss(\thetabar) + \bigoh\left(\frac{\epsilon^2}{N^2}\right)
\end{align*}
Where we used Lemma~\ref{lemma:delta_lemma_joint_convergence} with $x\defeq\theta^*-\thetabar$ and $a\defeq -H^{-1} \gradtheta \ExtraLoss(\thetabar)$ in the final step.
\end{proof}

\paragraph{Joint Convergence Lemma}
\begin{lemma}
\label{lemma:delta_lemma_joint_convergence}
Let $x$ and $a$ be vectors and $\epsilon$ be a scalar satisfying
$$ x + \bigoh(||x||^2)= \epsilon a + \bigoh(\epsilon^2) $$
when $\epsilon\to0$ the following holds 
$$x = \epsilon a + \bigoh(\epsilon^2)$$
\end{lemma}
\begin{proof}
Then from $\bigoh(||x||^2) = \epsilon a + \bigoh(\epsilon^2) - x$ it follows
\begin{align}
\bigoh(||x||) &= \frac{1}{||x||}||\epsilon a + \bigoh(\epsilon^2) - x|| \nonumber \\
&= || \frac{\epsilon a + \bigoh(\epsilon^2)}{||x||} - \frac{x}{||x||} || \label{eq:delta_lemma_joint_convergence_rhs}
\end{align}
Observe that $\epsilon\to0$ implies $||x||\to0$. Hence $\lim_{\epsilon\to0}\bigoh(||x||)=0$ which in turn implies that \eqref{eq:delta_lemma_joint_convergence_rhs} goes to zero as $||x||\to0$. Observe that $\frac{x}{||x||}$ is a unit-vector for all $x$. Hence \eqref{eq:delta_lemma_joint_convergence_rhs} can only become zero if the numerators converge:
$$ \lim_{\epsilon\to0} \left( \epsilon a + \bigoh(\epsilon^2) - x \right)= 0$$
Consequently $x$ is asymptotically linear in $\epsilon$ thus $x = \bigoh(\epsilon)$ and $\bigoh(||x||^2) = \bigoh(\epsilon^2)$ which concludes the proof.
\end{proof}

\paragraph{Proof of Proposition~\ref{prop:delta_infinitessimal}:}
\begin{proposition*}
The Delta Variance equals the infinitesimal LOO Variance for $\PosteriorCovarianceMatrix=\frac{1}{N}\iHessian \hat{F}_f \iHessian$:
$$ \Variance[IJ]{\derivedQtheta(z)} = \quadForm[\PosteriorCovarianceMatrix]{\deltaUz} $$
In particular
$$ \Variance[\theta \sim \epsilon-LOO]{\derivedQtheta(z)} = \quadForm[\PosteriorCovarianceMatrix]{\deltaUz} + \bigoh\left(\frac{\epsilon}{N^2}\right)  $$
\end{proposition*}

\begin{proof}
Let $\thetalooepsilon$ be the parameters obtained after retraining $\theta$ with a single data-point $x_i$ down-weighted from weight $\frac{1}{N}$ to $\frac{1-\epsilon}{N}$ and let $\thetabar$ be the original parameters obtained from training on all data with weight $\frac{1}{N}$. Then from Lemma~\ref{lemma:delta_influence_new_parameters} we obtain for $\frac{\epsilon}{N}\to0$
$$ \thetalooepsilon = \thetabar + \frac{\epsilon}{N} \iHessian \gradtheta \log \pfthetaxi + \bigoh\left(\frac{\epsilon^2}{N^2}\right) $$
Now define the parameter difference as
$$ \deltatheta \defeq \thetalooepsilon - \thetabar = \frac{\epsilon}{N} \iHessian \gradtheta \log \pfthetaxi + \bigoh\left(\frac{\epsilon^2}{N^2}\right) $$
and apply Taylors Formula:
\begin{align*}
\derivedQ_{\thetalooepsilon}(z) 
    &= \derivedQthetabar(z) + \gradtheta \trans{\derivedQthetabar(z)} \deltatheta + \bigoh(||\deltatheta||^2) \\
    &= \derivedQthetabar(z) + \frac{\epsilon}{N} \gradtheta \trans{\derivedQthetabar(z)}  \iHessian \gradtheta \log \pfthetaxi \\&+ \bigoh\left(\frac{\epsilon^2}{N^2}\right)
\end{align*}
Hence
\begin{align}
&\Variance[\theta \sim \epsilon-LOO]{\derivedQtheta(z)} 
\defeq \frac{N}{\epsilon^2}\Variance[\iinN]{\derivedQ_{\thetalooepsilon}(z)} \nonumber \\
    &= \frac{1}{N} \Variance[i]{
     \gradtheta \trans{\derivedQthetabar(z)}  \iHessian \gradtheta \log \pfthetaxi + \bigoh\left(\frac{\epsilon}{N}\right) 
     } \nonumber\\
    &= \frac{1}{N} \Variance[i]{
     \gradtheta \trans{\derivedQthetabar(z)}  \iHessian \gradtheta \log \pfthetaxi 
    } \nonumber
    \\ &+ \bigoh\left(\frac{\epsilon}{N^2}\right) \label{eq:delta_proof_loo_zero_expectation}\\
    &= \frac{1}{N} \Expectation[i]{
     \left(
     \gradtheta \trans{\derivedQthetabar(z)}  \iHessian \gradtheta \log \pfthetaxi
     \right)^2
    } \nonumber \\
    &+ \bigoh\left(\frac{\epsilon}{N^2}\right) \label{eq:delta_proof_loo_gradient_expectation} \\
    &= \frac{1}{N}
    \quadForm[\iHessian \hat{F}_f \iHessian]{\deltaUz}  + \bigoh\left(\frac{\epsilon}{N^2}\right) \nonumber
\end{align}
In step \ref{eq:delta_proof_loo_zero_expectation} we use that $\theta$ is a at local optimum of the loss such that $\Expectation[i]{\gradtheta\log\pfthetaxi}=0$ and in step \ref{eq:delta_proof_loo_gradient_expectation} we use that the expectation is only with respect to the $\ftheta(x_i)$ terms and $\hat{F}_f=\Expectation[i]{\quadForm[]{\gradtheta \log \pfthetaxi}}$.
\end{proof}

\paragraph{Proof of Proposition~\ref{prop:delta_boostrapping}:}
\begin{proposition*}
The Delta Variance equals the $\epsilon$-Bootstrapped Variance with a diminishing approximation error for $\PosteriorCovarianceMatrix=\frac{1}{N} \iHessian \hat{F}_f \iHessian$:
$$ \Variance[\theta \sim \epsilon-Bootstrap]{\derivedQtheta(z)} = \quadForm[\PosteriorCovarianceMatrix]{\deltaUz} + \bigoh\left( \frac{\epsilon}{N^{2.5}} \right)$$
\end{proposition*}
\begin{proof}
    The proof mirrors the proof for Proposition~\ref{prop:delta_infinitessimal} with a different $\deltatheta$ obtained from Lemma~\ref{lemma:delta_bootstrapping_new_parameters}: $\thetaRademacher = \thetabar + \deltatheta$ with $$\deltatheta = \frac{\epsilon}{N} \iHessian G \bootRademacher + \bigoh\left(\frac{\epsilon^2}{N^{1.5}}\right)$$
    Again we apply Taylors Formula:
\begin{align*}
\derivedQ_{\thetaRademacher}(z) 
    &= \derivedQthetabar(z) + \gradtheta \trans{\derivedQthetabar(z)} \deltatheta + \bigoh(||\deltatheta||^2) \\
    &= \derivedQthetabar(z) + \frac{\epsilon}{N} \gradtheta \trans{\derivedQthetabar(z)}  \iHessian \gradtheta G \bootRademacher + \bigoh\left(\frac{\epsilon^2}{N^{1.5}}\right)
\end{align*}
Next we use that $\Expectation{G \bootRademacher \trans{\bootRademacher} \trans{G}} = \Expectation{G \trans{G}} = \hat{F}_f N$
\begin{align}
&\Variance[\theta \sim \epsilon-Bootstrap]{\derivedQtheta(z)} 
\defeq \frac{1}{\epsilon^2}\Variance[\bootRademacher]{\derivedQ_{\thetaRademacher}(z)} \nonumber \\
    &= \frac{1}{\epsilon^2}\Variance[\bootRademacher]{
    \derivedQthetabar(z) + \frac{\epsilon}{N} \gradtheta \trans{\derivedQthetabar(z)}  \iHessian  G \bootRademacher + \bigoh\left(\frac{\epsilon^2}{N^{1.5}}\right)
    } \nonumber \\
    &= \frac{1}{N^2} \Variance[\bootRademacher]{
     \gradtheta \trans{\derivedQthetabar(z)}  \iHessian  G \bootRademacher + \bigoh\left(\frac{\epsilon}{N^{0.5}}\right)
    } \nonumber \\
    &= \frac{1}{N^2} \Variance[\bootRademacher]{
     \gradtheta \trans{\derivedQthetabar(z)}  \iHessian G \bootRademacher
    } + \bigoh\left(\frac{\epsilon}{N^{2.5}}\right) \nonumber \\
    &= \frac{1}{N^2} \Expectation[\bootRademacher]{
     \gradtheta \trans{\derivedQthetabar(z)}  \iHessian G \bootRademacher \trans{\bootRademacher} \trans{G} \iHessian \gradtheta \derivedQthetabar(z)
    } + \bigoh\left(\frac{\epsilon}{N^{2.5}}\right) \nonumber \\
    &= \frac{1}{N^2} \Expectation[\bootRademacher]{
     \gradtheta \trans{\derivedQthetabar(z)}   \iHessian N \hat{F}_f \iHessian \gradtheta \derivedQthetabar(z)
    } + \bigoh\left(\frac{\epsilon}{N^{2.5}}\right) \nonumber \\
    &=  \Expectation[\bootRademacher]{
     \gradtheta \trans{\derivedQthetabar(z)}   \frac{1}{N} \iHessian \hat{F}_f \iHessian \gradtheta \derivedQthetabar(z)
    } + \bigoh\left(\frac{\epsilon}{N^{2.5}}\right) \nonumber \\
    &= 
    \DVariance{\derivedQtheta(z)}
     + \bigoh\left(\frac{\epsilon}{N^{2.5}}\right) \nonumber 
\end{align}
with $\PosteriorCovarianceMatrix=\frac{1}{N} \iHessian \hat{F}_f \iHessian$.
\end{proof}

\begin{lemma}
\label{lemma:delta_bootstrapping_new_parameters}
    Let random variable $\theta_r$ be the parameters obtained from training $\ftheta$ on data $\Data$ with each data-point $x_i\in \Data$ re-weighted randomly to either $\frac{1-\epsilon}{N}$ or $\frac{1+\epsilon}{N}$. Let vector $r$ (where all $r_i$ are Rademacher distributed) determine the sign: $\frac{1+r_i\epsilon}{N}$,
    then
    $\theta_r = \thetabar + \deltatheta$ with $$\deltatheta = \frac{\epsilon}{N} \iHessian G r + \bigoh\left(\frac{\epsilon^2}{N^{1.5}}\right)$$
    for the $i$-th column in $G$ being the training gradient for data-point $x_i$: $\gradtheta\log \pfthetaxi$.
    Furthermore $\Expectation[r]{\deltatheta}=\bigoh\left(\frac{\epsilon^2}{N^{1.5}}\right)$ and $\Expectation[r]{||\deltatheta||_2^2}=\bigoh\left(\frac{\epsilon^2}{N}\right)$.
\end{lemma}
\begin{proof}
The proof is identical to Lemma~\ref{lemma:delta_influence_new_parameters}, with the exception that $\ExtraLoss(\thetabar)$ depends on $N$. In particular $\ExtraLoss(\thetabar)$ being the summation of $N$ Rademacher weighted losses makes $\ExtraLoss(\thetabar)$ grow probabilistically with order $\sqrt{N}$. 
Hence also $\gradtheta^2 \ExtraLoss(\thetabar) = \bigoh(\sqrt{N})$.
Adapting the proofs of Lemma~\ref{lemma:delta_influence_new_parameters} and \ref{lemma:delta_lemma_joint_convergence} we obtain:
\begin{align*}
\theta^*-\thetabar 
    &= -\left(H^{-1} + \bigoh\left(\frac{\epsilon}{N^{0.5}}\right)\right) \left(\frac{\epsilon}{N} \gradtheta \ExtraLoss(\thetabar) + \bigoh(||(\theta^*-\thetabar)||^2) \right) \\
    &= -\frac{\epsilon}{N} H^{-1} \gradtheta \ExtraLoss(\thetabar) + \bigoh\left(\frac{\epsilon^2}{N^{1.5}}\right) + \bigoh(||(\theta^*-\thetabar)||^2) \\
    &= -\frac{\epsilon}{N} H^{-1} \gradtheta \ExtraLoss(\thetabar) + \bigoh\left(\frac{\epsilon^2}{N^{1.5}}\right)
\end{align*}
Note that $\Expectation[r]{\deltatheta}=\bigoh\left(\frac{\epsilon^2}{N^{1.5}}\right)$ because $\Expectation[r]{Gr}=0$ follows from $\Expectation[r]{r}=0$. Finally recall that $\Expectation[r]{G \bootRademacher \trans{\bootRademacher} \trans{G}} = \hat{F}_f N$ implies $\Expectation[r]{|| H^{-1} G r ||_2^2} = \bigoh(N)$, hence
$\Expectation[r]{||\deltatheta||_2^2}=\Bigoh{\Expectation[r]{||\frac{\epsilon}{N} H^{-1} \gradtheta \ExtraLoss(\thetabar)||_2^2}} = \Bigoh{\frac{\epsilon^2}{N^2} \Expectation[r]{|| H^{-1} G r ||_2^2}} = \Bigoh{\frac{\epsilon^2}{N}}$.
\end{proof}

\paragraph{Proof of Propositions~\ref{prop:delta_adversarial_offset} and~\ref{prop:delta_adversarial_noise}:}
\begin{proof}
\newcommand{\thetaadverserial}{\theta^{\epsilon-adv}_{z,y}}
Using the technique of influence functions -- see Lemma~\ref{lemma:delta_influence_new_parameters} -- we can compute the parameters $\theta^{bad}$ (i.e. $\theta^{adv}$ and $\theta^{noisy}$) after re-training with the adversarial data-point $(z, y)$ included with $L_2$-loss and $\frac{\epsilon}{N}$-weight. Let $\thetabar$ be the optimal parameters before re-training, then
\begin{align*}
\deltatheta 
    &\defeq \theta^{bad} - \thetabar \\
    &= \frac{\epsilon}{N} \iHessian \gradtheta \frac{1}{2}(\derivedQtheta(z)-y)^2 +  \Bigoh{\frac{\epsilon^2}{N^2}} \\
    &= \frac{\epsilon}{N} \iHessian \deltaAdversarial \gradtheta \derivedQtheta(z) +  \Bigoh{\frac{\epsilon^2}{N^2}}
\end{align*}
Where $\deltaAdversarial$ is the prediction error $\deltaAdversarial\defeq\derivedQtheta(z)-y$.
Similarly to Proposition~\ref{prop:delta_infinitessimal} we perform a Taylor approximation around $\thetabar$ and plug $\deltatheta$ in:
\begin{align*}
\derivedQthetabar(z)- \derivedQ_{\theta^{bad}}(z)
    &= \frac{\epsilon}{N} \gradtheta \trans{\derivedQthetabar(z)}  \iHessian \deltaAdversarial \gradtheta \derivedQtheta(z) + \Bigoh{\frac{\epsilon^2}{N^2}} \\
    &= \frac{\epsilon}{N} \deltaAdversarial \quadForm[\iHessian]{\deltaUz} + \Bigoh{\frac{\epsilon^2}{N^2}}
\end{align*}
hence 
$$
|\derivedQthetabar(z)- \derivedQ_{\theta^{bad}}(z)| = \frac{\epsilon}{N} \left| \deltaAdversarial \quadForm[\iHessian]{\deltaUz}\right| + \Bigoh{\frac{\epsilon^2}{N^2}}
$$
This concludes the proof for Proposition~\ref{prop:delta_adversarial_offset}, so we proceed to Proposition~\ref{prop:delta_adversarial_noise} where $\deltaAdversarial$ is a zero-mean random variable with variance $\sigma^2$. Using short hand notation $\nu_z \defeq \quadForm[\iHessian]{\deltaUz}$
\begin{align*}
\Expectation[\deltaAdversarial]{ (\derivedQthetabar(z)-\derivedQ_{\theta^{noisy}}(z))^2 }
    &= \Expectation[\deltaAdversarial]{\left(\frac{\epsilon}{N} \deltaAdversarial \nu_z + \Bigoh{\frac{\epsilon^2}{N^2}}\right)^2}
    \\
    &= \Expectation[\deltaAdversarial]{\frac{\epsilon^2}{N^2} \deltaAdversarial^2 \nu_z^2 + \Bigoh{\frac{\epsilon^3}{N^3}} }
    \\
    &= \frac{\epsilon^2}{N^2}\nu_z^2 \Expectation[\deltaAdversarial]{\deltaAdversarial^2}  + \Bigoh{\frac{\epsilon^3}{N^3}}
    \\
    &= \frac{\epsilon^2}{N^2} \sigma^2 \nu_z^2 + \Bigoh{\frac{\epsilon^3}{N^3}} \\
    &= \epsilon^2 \sigma^2  \left(\quadForm[\frac{1}{N}\iHessian]{\deltaUz}\right)^2 
    \\ &+ \Bigoh{\frac{\epsilon^3}{N^3}}
\end{align*}
The limits follow trivially.
\end{proof}

\paragraph{Proof of Propositions~\ref{prop:delta_OOD}:}

\begin{proof}
The proof relies on two insights: Firstly $\mu=0$ because by assumption we have trained until convergence such that 
$\sum_{(x, y)\in \Data}\gradtheta\log\pfthetax=0$.
Secondly by definition the previous fact:
$$C\defeq\frac{1}{N}\sum_{(x, y)\in\Data}\quadForm[]{\gradtheta\log\pfthetax} \defeq \hat{F}_f$$
Thus
\begin{align*}
d_M^{\phi}(z, \Data) 
    &= \quadForm[C^{-1}]{\gradtheta\derivedQtheta(z)} \\
    &= \quadForm[\hat{F}_f^{-1}]{\deltaUz}
\end{align*}
\end{proof}

\nocite{vonMises:1931}
\nocite{Bernstein:TheoryProbabilities}
\nocite{lecam1953:bernsteinVonMises}
\nocite{Kleijn:2012misspecifiedBvM}
\nocite{Gorroochurn:2020WhoInventedDelta}
\nocite{cotes:1722Harmonia}
\nocite{gauss:1823ErrorTheory}
\nocite{Lambert:1765Beytraege}
\nocite{Kelley:1928CrossroadsMind,Wright:1934PathCoefficients, Doob:1935LimitingDistributions,Dorfman:1938note}
\nocite{Denker:1990Transforming}
\nocite{Tibshirani:1996ErrorEstimates}
\nocite{Hwang:1997PredictionIntervals}
\nocite{Veaux:1998PredictionIntervals}
\nocite{Nilsen:2022DeltaMethod}
\nocite{Gull:1989MaximumEntropy}
\nocite{Tishby:1989InferenceProbabilities}
\nocite{MacKay:92b}
\nocite{ritter2018:laplace}
\nocite{Martens:2015}
\nocite{Immer:2021LocalLinearization}
\nocite{Miller:1974Jackknife} 
\nocite{Tukey:1958BiasConfidence}
\nocite{Quenouille:1949Correlation}
\nocite{Jaeckel:1972InfinitesimalJackknife}
\nocite{Hampel:1974Influence}
\nocite{Hodges1967:Efficiency}
\nocite{cook:1982residuals}
\nocite{Efron:1979}
\nocite{Bickel1981:Bootstrap}
\nocite{Koh:2017UnderstandingBlackBox}

\end{document}